%
%

\documentclass[lettersize,journal]{IEEEtran}

\IEEEoverridecommandlockouts
\usepackage{cite,enumitem}
\usepackage{amsmath,amssymb,amsfonts}
\usepackage{amsthm,comment}
\usepackage{graphicx}
\usepackage{textcomp}
\usepackage{xcolor}
\usepackage[]{footmisc}
\usepackage{bbm}

\newcommand{\nm}[1]{\textcolor{magenta}{\bf [NM: #1]}}
\newcommand{\addb}[1]{\textcolor{black}{#1}}
\newcommand{\addbb}[1]{\textcolor{black}{#1}}

\newcommand{\secref}[1]{Sec.~\ref{#1}}

\usepackage{cuted}
\usepackage{stfloats}
\usepackage{mathtools,amssymb,lipsum, nccmath}
\newtheorem{theorem}{Theorem}
\newtheorem{lemma}{Lemma}

\usepackage{algorithm}
\usepackage{algpseudocode}
\usepackage{color,soul}
\usepackage{graphicx}
\usepackage{mathrsfs}
\usepackage{aligned-overset}
\usepackage{hyperref}
\usepackage{subcaption}
\newtheorem{remark}{Remark}
\theoremstyle{definition}

\newtheorem{Ass}{Assumption}
\theoremstyle{plain}

\usepackage[nodisplayskipstretch]{setspace}
\setstretch{1.0}

\usepackage{subcaption} 
\newcommand{\kappasc}{\kappa_{\mathrm{sc}}}
\newcommand{\kappanc}{\kappa_{\mathrm{nc}}}

\usepackage{xr}

\def\BibTeX{{\rm B\kern-.05em{\sc i\kern-.025em b}\kern-.08em
    T\kern-.1667em\lower.7ex\hbox{E}\kern-.125emX}}

\begin{document}
\bstctlcite{BSTcontrol}


\title{Biased Federated Learning\\ under Wireless Heterogeneity}
\author{Muhammad Faraz Ul Abrar, \IEEEmembership{Graduate Student Member, IEEE} and Nicol\`{o} Michelusi,~\IEEEmembership{Senior Member, IEEE}
\thanks{M. Faraz Ul Abrar and N. Michelusi are with the School of Electrical, Computer and Energy
Engineering, Arizona State University. email: \{mulabrar,
nicolo.michelusi\}@asu.edu. This research has been funded in part by NSF under grant CNS-$2129015$.}
\thanks{Preliminary versions of this work appeared in 
\cite{Biased_OTA_FL_ICC} and 
\cite{icc_ncvx_biased_ota_fl}.}
\vspace{-5mm}
}

\IEEEaftertitletext{\vspace{-0.25\baselineskip}}

\maketitle
\setulcolor{red}
\setul{red}{2pt}
\setstcolor{red}

\addtolength{\textheight}{0.04in}

\begin{abstract} 
Federated learning (FL) has emerged as a promising framework for distributed learning, enabling collaborative model training without sharing private data. Existing wireless FL works primarily adopt two communication strategies: (1) over-the-air (OTA) computation, which exploits wireless signal superposition for simultaneous gradient aggregation, and (2) digital communication, which allocates orthogonal resources for gradient uploads. Prior work on OTA and digital FL either enforces zero bias (explicitly or via assumed \emph{homogeneous} path loss) or permits uncontrolled bias, yielding high-variance updates under \emph{heterogeneous} channels and creating a performance bottleneck due to devices with poor channel conditions. \addb{We propose wireless FL updates that admit a structured, time-invariant model bias to achieve low-variance gradient aggregation, and analyze their convergence in a unified framework,
 in both strongly convex and non-convex settings.
 }The resulting bounds reveal a bias-variance trade-off governed by the design parameters. To optimize this trade-off, we pose a non-convex joint design problem and develop a successive convex approximation framework to tune the parameters. \addb{Extensive experiments across heterogeneous wireless settings, covering both strongly convex and non-convex image classification tasks, compare the proposed OTA and digital designs against state-of-the-art baselines.} The results demonstrate that optimizing the bias–variance trade-off through a structured bias yields faster FL convergence and improved generalization over existing schemes.
\end{abstract}


\begin{IEEEkeywords}
Federated learning (FL), over-the-air computation (OTA), biased wireless FL, bias-variance trade-off, heterogeneous wireless FL.  \end{IEEEkeywords}

\vspace{-3mm}
\section{Introduction}
The surge of massive data generated by Internet-of-Things (IoT) devices, with significant advancements in their computational capabilities, has shifted the focus from classical machine learning (ML) to distributed learning paradigms. Among the distributed learning frameworks, federated learning (FL) has attracted increasing popularity in both academia and industry due to its robust privacy guarantees and reduced communication overhead \cite{FL_survey,DistL_intro}. 
With FL,
$N$ devices (e.g., smartphones and IoT sensors) with private data collaborate with a central parameter server (PS) (e.g., a cloud or edge server)
by exchanging only local model or gradient information,
with the goal to train an ML model 
solving
\begin{align}
\mathbf{w}^* = \arg\min_{\mathbf{w} \in \mathbb{R}^d} F(\mathbf{w}) \triangleq \frac{1}{N} \sum_{m \in [N]} f_m(\mathbf{w}), \tag{P}
\label{FL_prob}
\end{align}
where $f_m(\mathbf{w})$ represents the local objective of device $m$ (e.g., cross-entropy loss) and $F(\mathbf{w})$ is the global objective (loss) function. To solve \eqref{FL_prob}, gradient-based first-order iterative optimization methods such as distributed stochastic gradient descent (SGD) have been widely utilized \cite{Fedavg}. In each FL round, the PS broadcasts the latest FL model to all devices, which then compute their local gradients, 
and send them back to the PS for aggregation. This training process is repeated over several rounds until convergence.

While FL obviates raw data transmission, communication efficiency remains a critical bottleneck in wireless systems due to high-dimensional gradient exchanges over bandwidth-constrained noisy channels \cite{DistL_intro,FL_survey,FL_wireless_impact}. Two primary approaches have emerged to address this: digital FL \cite{Upd_aware_sched,Sched_latency_FL,Sched_policies,Quant_FL_Outage_Constraint,Wireless_Quant_FL_Joint,FL_unreliable_resource_const}, which uses orthogonal resource block (RB) allocation for gradient uploads; 
and over-the-air FL (OTA-FL) \cite{OTA_FL,FL_fading,One_bit_FL,BB_FL,NCOTA,OTA_FL_H_data,Biased_OTA_FL_ICC}, which exploits the natural superposition property of wireless multiple access channels (MAC) and allows simultaneous transmission to realize “single-shot” gradient aggregation. The literature on digital FL focuses primarily on designing communication-efficient device scheduling and RB allocation strategies to accelerate convergence (e.g., \cite{Upd_aware_sched,Sched_policies}). In contrast, OTA-FL works aim to design power control (pre-scaling and post-scaling) strategies to mitigate the noise in the updates \cite{OTA_FL,FL_fading, OTA_FL_H_data}. However, these approaches largely assume wireless homogeneity, where all devices experience the same average path loss, to ensure unbiased FL updates. In practical heterogeneous networks, weaker channel devices act as stragglers, and enforcing a zero-bias design introduces high variance in the FL updates. \addb{While prior works (e.g., \cite{BB_FL,Opt_Power_Control_OTA_Comp,Opt_power_control_OTA_FL}) permit non-zero bias, they do not provide mechanisms to \emph{control} the induced bias or quantify its impact on convergence under wireless heterogeneity. We address this gap by designing OTA and digital FL updates with a \emph{structured, time-invariant} bias governed by explicit design parameters, yielding a tunable bias–variance trade-off. We develop a unified convergence analysis that provides optimality-gap bounds for strongly convex objectives and finite-time stationarity guarantees for smooth non-convex objectives, making the roles of bias and variance explicit. Building on these insights, we pose the joint design as a non-convex problem and devise a successive convex approximation (SCA) framework that operates with statistical CSI at the server and admits efficient implementation. Extensive comparisons with state-of-the-art (SOTA) OTA and digital FL baselines demonstrate consistent gains 
in heterogeneous settings.}

\vspace{-3mm}
\subsection{Related Works and Motivation}
A central challenge in deploying FL over practical wireless networks is maintaining reliable device-PS communication over noisy channels. 
Recent studies have examined how wireless constraints shape FL performance. For example, \cite{Joint_L_Comm} analyzes joint resource-block (RB) allocation and device selection under packet errors, while \cite{FL_over_wireless_optimized} jointly optimizes communication efficiency and bandwidth allocation to accelerate convergence in wireless FL. To mitigate communication overhead, gradient quantization \cite{Prob_Quant1,Prob_Quant2} and sparsification \cite{Sparse_GD,Convergence_Sparse_Methods} have been widely explored. Alternatively, selecting a subset of devices per round \cite{optimal_client_selection_FL,Biased_client_selection} or performing multiple local SGD steps to reduce PS–device communication frequency 
\cite{local_SGD}
has been proposed. However, these studies do not jointly account for communication-efficient FL techniques and wireless impairments such as channel fading and noise.

To address this gap, several works have examined the performance of these schemes over practical wireless networks. Within digital FL, device selection and RB allocation have been addressed using heuristic schemes based on channel state information (CSI) and norm-based local gradient significance \cite{Upd_aware_sched}, while optimization-based device scheduling has been employed to achieve faster convergence in \cite{FL_unreliable_resource_const,ADFL}. Gradient upload costs are further reduced through probabilistic dithered quantization, targeting overall convergence time \cite{Quant_FL_Outage_Constraint} or quantization-induced variance \cite{Wireless_Quant_FL_Joint}. \addb{In the OTA-FL setting, \cite{BB_FL} proposes low-complexity scheduling to balance exploited data against aggregation noise but offers no convergence guarantees and lacks principled bias control. Channel-inversion power control \cite{OTA_FL,CHARLES} enforces unbiased aggregation, yet is limited by the weakest device, amplifying update variance under heterogeneity. To relax this, \cite{Opt_Power_Control_OTA_Comp} minimizes MSE for generic OTA function computation, insightful for aggregation but detached from learning-centric objectives and reliant on global instantaneous CSI at the base station. Closer to FL, \cite{Opt_power_control_OTA_FL} optimizes OTA-FL convergence, but presumes genie-aided global CSI (across all future rounds) and adopts a simplified aggregation rule without PS post-scaling, which can introduce uncontrolled bias in the absence of an explicit zero-bias constraint. A recent comparison \cite{Dig_vs_Analog} contrasts OTA and digital FL under wireless impairments via optimized device sampling while enforcing zero-bias updates, but does not jointly optimize the sampling and communication design parameters.} \addb{In contrast, we develop offline SCA-based designs that minimize the convergence bounds derived herein, requiring only the devices’ large-scale channel conditions at the server.}

Despite these efforts, prior studies either (i) assume wireless homogeneity, so that all devices experience the same average path loss (yielding zero-bias updates), (ii) enforce zero-bias updates under heterogeneity, or (iii) allow biased updates whose bias is \emph{uncontrolled}. In particular, \cite{Upd_aware_sched, FL_fading, OTA_FL, One_bit_FL, OTA_FL_H_data,OTA_FL_Optimization} focus on homogeneous deployments, yielding uniform average participation and thus no model bias, an assumption that is often unrealistic in practice. In contrast, \cite{Dig_vs_Analog, FL_unreliable_resource_const, Sched_policies, Wireless_Quant_FL_Joint, Quant_FL_Outage_Constraint, ADFL, Joint_L_Comm}  allow heterogeneity but mandate unbiased updates to guarantee convergence, which can inflate update variance by accommodating weak devices (the straggler effect). While this phenomenon is well noted for OTA-FL, an analogous bottleneck arises in digital FL, where weak devices demand disproportionate RB allocations under fixed budgets. Finally, while \cite{BB_FL,Opt_Power_Control_OTA_Comp,Opt_power_control_OTA_FL} relax the zero-bias constraint, the induced bias is neither explicitly controlled nor aligned with learning dynamics, and its effect under heterogeneity remains unquantified. \addb{By enabling a fixed, well-structured bias, our framework optimizes the bias-variance trade-off for both OTA and digital FL. Moreover, much of the convergence theory for wireless FL focuses on (strongly) convex objectives (e.g., \cite{OTA_FL_H_data,Opt_power_control_OTA_FL,Biased_OTA_FL_ICC, Dig_vs_Analog,Wireless_Quant_FL_Joint}, which is inconsistent with modern non-convex ML models used in practice. To bridge this gap, we provide a unified theory: optimality-gap bounds for strongly convex objectives and finite-time stationarity guarantees for smooth non-convex objectives. Extensive experiments demonstrate consistent gains over SOTA baselines in heterogeneous wireless deployments.}

\vspace{-3mm}
\subsection{Contributions and Organization}
Extending our prior works \cite{Biased_OTA_FL_ICC,icc_ncvx_biased_ota_fl}, we study OTA and digital FL over heterogeneous wireless networks. Our main contributions are:
\begin{enumerate}[leftmargin=*, labelsep=0.5em]
\item We introduce OTA and digital FL updates that admit a \emph{fixed, time-invariant} model bias parameterized by explicit design variables. Unlike prior zero-bias or uncontrolled-bias approaches, the proposed approach enables principled variance reduction via a tunable bias–variance trade-off.

\item \addb{We develop a unified convergence analysis that (i) applies to both OTA and digital schemes with the same bias–variance structure (differing only in the variance term), and (ii) covers both objective classes, optimality-gap bounds for strongly convex objectives, and finite-time stationarity bounds for non-convex objectives. In all cases, the bounds explicitly decompose bias and variance as functions of the design parameters.}

\item 
To optimize the bias-variance trade-off for faster convergence, we pose non-convex joint parameter designs \addb{using only large-scale channel conditions at the PS} and solve them via SCA. 

\item We benchmark the proposed designs against several SOTA OTA/digital baselines and show that our optimized designs consistently converge faster and generalize better on both strongly convex and non-convex image-classification tasks.
\end{enumerate}

The rest of this paper is organized as follows:  \secref{Sec:System_model} presents the system model and the biased OTA and digital FL schemes. \secref{Sec:Convergence_Analaysis} presents the theoretical convergence analysis (proofs in the Appendix). 
\secref{Sec:Optimal_Design} discusses comprehensive optimization-based frameworks for biased OTA-FL and digital-FL parameter design. Numerical results are detailed in \secref{Sec:Numerical_Results}, followed by concluding remarks in  \secref{Sec:Conclusion}.

\vspace{-3mm}
\subsection{Notation} 
A boldface lowercase letter represents a vector, e.g., $\mathbf v$. $\mathcal{CN}(\mathbf m,\Sigma)$ is a  circularly symmetric complex Gaussian random vector with
mean $\mathbf m$ and covariance $\Sigma$. The operators $\left\Vert \mathbf v \right\Vert$, $\left\Vert \mathbf v \right\Vert_\infty$, and $\mathbf v^\top$ denote the $\ell_2$-norm, $\ell_\infty$-norm, and transpose of $\mathbf v$, respectively. $[N]$ denotes the discrete set $\{1,2,\dots,N\}$. The expectation of a random variable over its associated probability distribution is denoted by $\mathbb{E}[\cdot]$. For a random vector $\mathbf v$, we denote its variance as $\mathrm{var}(\mathbf v){=}\mathbb E[\Vert\mathbf v{-}\mathbb E[\mathbf v]\Vert^2]$ ($\mathrm{var}(\mathbf v|\mathcal F)$ when conditioned on $\mathcal F$). For a collection of variables $\mathcal X$, $\mathcal X \ge 0$ denotes elementwise nonnegativity.

\section{System Model and Wireless FL}
\label{Sec:System_model}

We consider a wireless network with $N$ distributed devices and a single base station, acting as a parameter server (PS), that collaboratively learn a model via FL, as shown in Fig.~\ref{fig:FL_system_model}.
Each device $m \in [N]$ owns a private local dataset $\mathcal{D}_m$ 
and a private local objective $ f_m(\mathbf{w}) \triangleq \frac{1}{|\mathcal{D}_m|}\sum_{\boldsymbol{\xi}\in\mathcal{D}_m}\phi(\mathbf{w},\boldsymbol{\xi}),$
where $\phi(\mathbf{w},\boldsymbol{\xi})$ denotes the sample-wise loss evaluated at data sample $\boldsymbol{\xi}$ and $\mathbf{w}\in\mathbb{R}^d$ is the learning parameter. We assume balanced local dataset sizes, i.e., $|\mathcal{D}_m|=D$ for all $m$, so that the global objective $F(\mathbf w)$ in \eqref{FL_prob} is equivalent to uniform weighting over all samples across the network. \footnote{Our framework readily extends to imbalanced datasets by using a weighted global objective $F(\mathbf w)=\sum_{m\in[N]} \frac{|\mathcal D_m|}{\sum_{n \in [N]}|\mathcal D_n|}\,f_m(\mathbf w)$. We focus on the balanced case to isolate the impact of wireless heterogeneity.}
In this work, we employ the distributed SGD method over multiple FL rounds to solve \eqref{FL_prob}. At the start of round $t$, the PS broadcasts the latest model parameter $\mathbf{w}_t$ to each device in the network. Next, device $m$ uses a randomly drawn mini-batch  $\mathcal{B}_{m,t} \subseteq \mathcal{D}_m$ to estimate its local gradient $\boldsymbol{g}_{m,t} = \frac{1}{\vert \mathcal{B}_{m,t} \vert} \sum_{\boldsymbol{\xi} \in \mathcal{B}_{m,t}} \nabla \phi(\mathbf{w}_t,\boldsymbol{\xi})$, with $\mathbb{E} [\boldsymbol{g}_{m,t}] = \nabla f_m(\mathbf{w}_t)$, where $\nabla f_m(\mathbf{w}_t)$ is the full-batch local gradient. Next, each device uploads its estimated local gradient to the PS. Ideally, the PS aims to compute the true global stochastic gradient
 \begin{align}
\overline{\boldsymbol{g}}_t = \frac{1}{N}\sum_{m \in [N]} \boldsymbol{g}_{m,t},
\label{Gradagg}
\end{align}
obtained by aggregating the local gradients from each device without any errors. With it, the PS updates the FL model as
\begin{align}
	    \mathbf{w}_{t+1} = \mathcal{P}_{\mathcal{W}} \left(\mathbf{w}_{t} - {\eta} \overline{\boldsymbol{g}}_t\right),  \quad t \geq0\,,\label{GD}
\end{align}
where $\eta$ represents the learning step size, and $\mathcal{P}_{\mathcal{W}}(\cdot)$ denotes projection onto a closed and convex set $\mathcal{W}$ such that $\mathbf{w}^* \in \mathcal{W}$. This projection ensures compliance with practical constraints, such as privacy and energy-limited transmissions \cite{Clip_DPFL}.
The FL updates in \eqref{GD} are repeated until a desired metric, such as accuracy, is achieved or a fixed number of learning rounds $T$ is completed. Nevertheless, \eqref{Gradagg} requires noiseless aggregation of all the local gradients, each contributing a fraction $1/N$ of the total. In practice, however, the
gradient aggregation is affected by errors due to imperfect communications induced by noisy wireless fading channels.
We model the wireless channel between each device $m \in [N]$ and the PS as a Rayleigh flat block fading channel, i.e., $h_{m,t} \sim \mathcal{CN}(0, \Lambda_m)$. The channel coefficients are independent and identically distributed (i.i.d.) over FL rounds and remain constant within a single round. The parameter $\Lambda_m$ represents the average 
channel gain, dependent on large-scale propagation conditions, and is assumed constant throughout FL runtime and known to the PS.
\footnote{The PS can obtain $\Lambda_m$ at the start of the learning procedure. 
}

We emphasize that, unlike existing works 
\cite{Upd_aware_sched,FL_fading,OTA_FL,One_bit_FL,OTA_FL_H_data} assuming identical average path loss across devices ($\Lambda_m{=}\Lambda_n , \forall m, n {\in} [N]$), we consider a heterogeneous wireless environment where devices experience varying path losses. We consider two widely studied communication schemes: 1) OTA computation
\cite{OTA_FL,FL_fading,BB_FL,One_bit_FL,NCOTA,OTA_FL_H_data,Biased_OTA_FL_ICC}
and 2) digital transmission-based FL \cite{Upd_aware_sched,Sched_latency_FL,Sched_policies,Quant_FL_Outage_Constraint,Wireless_Quant_FL_Joint,FL_unreliable_resource_const}. 
 Similar to \cite{OTA_FL,OTA_FL_H_data,BB_FL,Sched_policies,Upd_aware_sched} and other related works, the downlink broadcast transmission of the FL model is assumed to be noiseless. Hence, we focus solely on the uplink communication model. 

\addb{Before presenting these  schemes in 
\secref{subsecOTA} and \secref{subsecDIGI}, respectively, we first present two operating assumptions next.}

\begin{figure}
\includegraphics[scale=0.25]{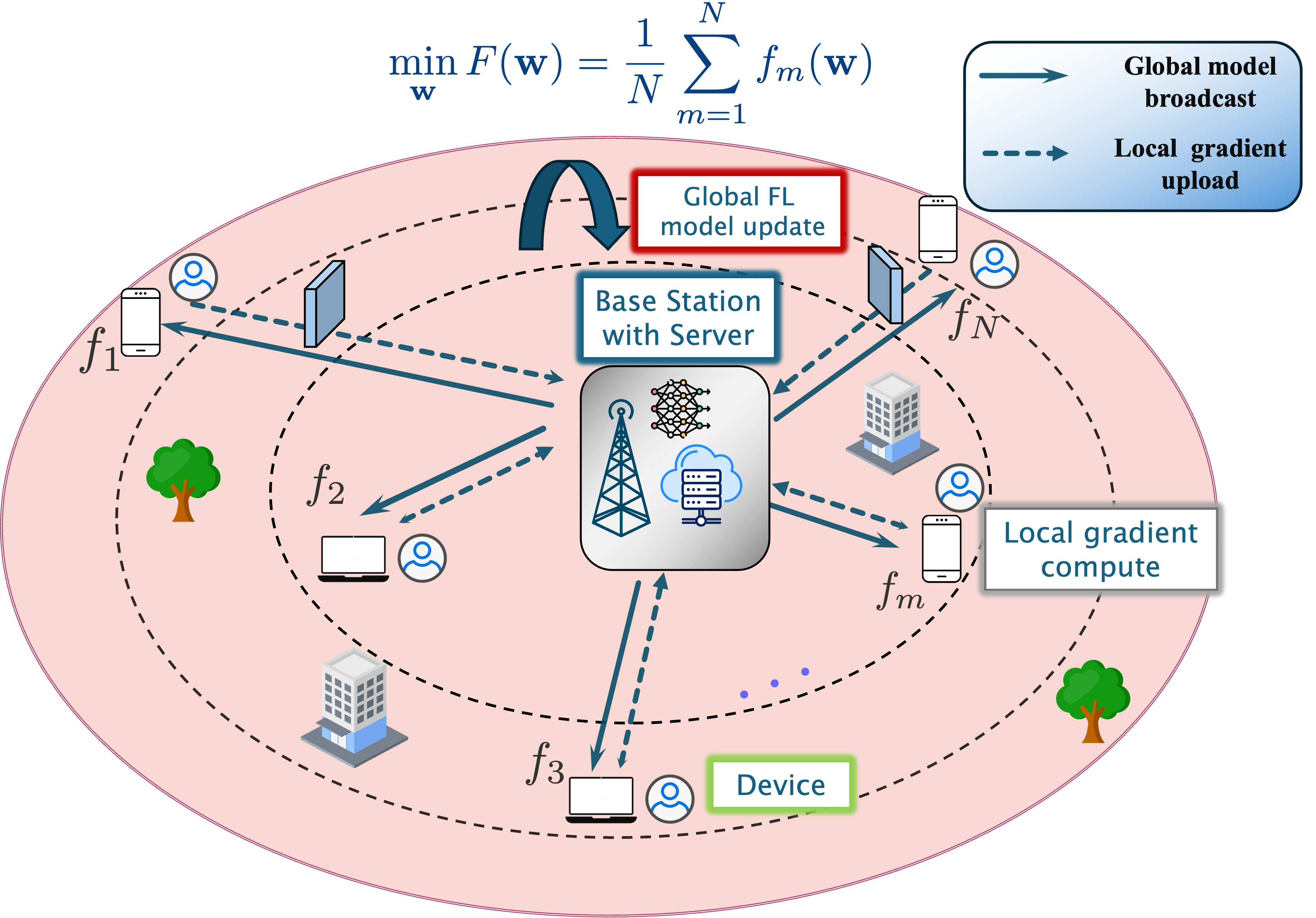}
\centering
\caption{A wireless FL setup with one parameter server collaborating with $N$ devices with heterogeneous wireless conditions.}
\vspace{-7mm}
\label{fig:FL_system_model}
\end{figure}

\begin{Ass}  
\label{ass:bounded_loss_grad}
The sample-wise loss gradient for any given individual data sample $\xi$ is bounded, i.e., $\Vert \nabla \phi(\mathbf{w}, \xi) \Vert \leq G_\text{max},\forall \mathbf{w}\in\mathcal W$. It then follows from the triangular inequality that $\Vert\mathbf g_{m,t}\Vert\leq G_\text{max},\forall m,t$.
\end{Ass}  
\begin{Ass} 
\label{ass:bounded_stochastic_grad}
The mini-batch stochastic local gradient $\boldsymbol{g}_{m,t}$ is an unbiased estimate of the full-batch local gradient with bounded variance, i.e., $\mathbb{E}[\boldsymbol{g}_{m,t}|\mathbf w_t] = \nabla f_m(\mathbf{w}_t)$ and $
\mathrm{var}(\boldsymbol{g}_{m,t}|\mathbf w_t) \leq \sigma_m^2, \, \forall m \in [N], \, \mathbf{w}_t \in \mathcal{W}, t \geq 0$. \end{Ass}


\noindent\addb{Assumption \ref{ass:bounded_loss_grad} is widely adopted by many wireless FL works, see e.g., \cite{Transmission_Power_Control_OTA_FA,OTA_FL_H_data,OTA_FL_Optimization,Wireless_Quant_FL_Joint}, while Assumption \ref{ass:bounded_stochastic_grad} is standard in SGD-based FL analyses (e.g., \cite{niid_fedavg,OTA_FL_H_data,Opt_power_control_OTA_FL,OTA_FL_Optimization,Quant_FL_Outage_Constraint}).}

\subsection{Over-the-air transmission}
\label{subsecOTA}
OTA-FL exploits the superposition property of the wireless MAC to perform joint computation and communication, enabling “one-shot’’ aggregation of local gradients at the PS \cite{Analog_WSN}. \addb{
Following standard practice in the OTA-FL literature, we assume perfect time (and carrier) synchronization across devices during uplink aggregation \cite{OTA_FL,FL_fading,BB_FL,One_bit_FL,NCOTA,OTA_FL_H_data,Biased_OTA_FL_ICC}. This assumption can be relaxed via recently developed calibration techniques for OTA aggregation; see, e.g., \cite{OTA_realignment,ngo2025distributedmimoovertheairphase}.}
To perform OTA-FL, each device $m$ 
maps its local gradient to a transmit signal $\mathbf{x}_{m,t}$, 
and devices transmit simultaneously, hence the
 PS receives
\begin{align}
\mathbf{y}_t \;=\; \sum_{m\in[N]} h_{m,t}\,\mathbf{x}_{m,t} \;+\; \mathbf{z}_t,
\label{Signal_model}
\end{align}
where $\mathbf{z}_t \sim \mathcal{CN}(\mathbf 0, N_0 \mathbf I)$ is the additive white Gaussian noise at the PS, i.i.d. over $t$. To approximate the ideal aggregation in \eqref{Gradagg} under an average per-sample energy constraint $E_s$, devices employ truncated channel inversion with a device-specific pre-scaling factor \(\gamma_m\):
\begin{align}
\mathbf{x}_{m,t} \;=\; \frac{1}{h_{m,t}}\,\chi^A_{m,t}\,\gamma_m\,\boldsymbol{g}_{m,t},
\label{OTA_signal_transmission}
\end{align}
where $\mathbf{\chi}^A_{m,t}$ is the OTA transmission indicator, defined as 
\begin{align}
\mathbf{\chi}^A_{m,t} =  
\begin{cases}
1, \,\quad \text{ if } \vert h_{m,t}\vert \geq \frac{G_\text{max} \gamma_m}{\sqrt{d E_s}},\\
0,\quad \,\,\,\text{otherwise}. \label{OTA_indicator}
\end{cases}
\end{align}
Here $G_\text{max}$ is an upper bound on $\Vert\boldsymbol{g}_{m,t}\Vert$ (see Assumption~\ref{ass:bounded_loss_grad}), ensuring \(\|\mathbf{x}_{m,t}\|^2/d \le E_s\). Note that a device does not participate in round $t$ if $\vert h_{m,t}\vert < \frac{G_\text{max} \gamma_m}{\sqrt{d E_s}}$. Notably, the participation rule is decentralized, requiring only local instantaneous CSI \(h_{m,t}\), which can be obtained via a downlink pilot under channel reciprocity \cite{FL_d2d}. \addb{Unlike homogeneous OTA-FL designs that use a common pre-scaler or a common threshold, our scheme allows device-specific pre-scalers $\{\gamma_m\}$ and hence device-specific thresholds $\{\frac{G_\text{max} \gamma_m}{\sqrt{d E_s}}\}$, tailored to their specific average path loss conditions $\Lambda_m$. Importantly, these are not a fixed heuristic: pre-scalers (and thus the thresholds) are optimized \emph{once offline} (Sec.\ref{OTA_FL_optimization_design}) using only statistical CSI, then held time-invariant during training while \(\chi^A_{m,t}\) adapts online to the instantaneous channel.} 
With this design, the PS estimates the stochastic global gradient \eqref{Gradagg} as
\begin{align}
\hat{\boldsymbol{g}}_t = \frac{\mathbf{y}_t} {\alpha}
=\frac{1}{\alpha} \sum_{m \in [N]}\chi^A_{m,t}\gamma_{m}\boldsymbol{g}_{m,t}
\;+    
\;\frac{\mathbf{z}_t}{\alpha},
\label{OTA_Signal_model2}
\end{align}
where $\alpha$ is a post-scaler. To provide intuition for our choice of the global gradient in \eqref{OTA_Signal_model2}, observe that,
taking expectation over fading and noise,
 $\mathbb{E} [\mathbf{y}_t|\{\boldsymbol{g}_{m,t}\}_m] = \sum_{m \in [N]} \alpha_{m} \boldsymbol{g}_{m,t}$, where $\alpha_{m} =  {\gamma_{m} \exp\{ \frac{ - \gamma_{m}^2 G_\text{max}^2}{d \Lambda_m  E_s}}\}$. By setting the post-scaler as $\alpha = \sum_{m \in [N]} \alpha_{m}$, the estimated global gradient $\hat{\boldsymbol{g}}_t$ satisfies a desirable property: the expected estimate $\tilde{\boldsymbol{g}}_t \triangleq \mathbb{E}[\hat{\boldsymbol{g}}_t|\{\boldsymbol{g}_{m,t}\}_m]$ is a convex combination of the stochastic local gradients, i.e.,
\begin{align}
\tilde{\boldsymbol{g}}_t = \sum_{m \in [N]} p_m
 \boldsymbol{g}_{m,t},
 \label{OTA_exp_global_grad} 
\end{align}
where $p_m \triangleq \alpha_m/\alpha$ represents the average \emph{participation level} of device $m$ induced by the OTA-FL design and satisfies $0 \leq p_m \leq 1$ and $\sum_{m \in [N]} p_{m}{=}1$. 
Thus, $\hat{\boldsymbol{g}}_t$ is an unbiased estimator of $\tilde{\boldsymbol{g}}_t$, but a \emph{biased} estimator of the target global gradient $\overline{\boldsymbol{g}}_t$ in \eqref{Gradagg}, since $\tilde{\boldsymbol{g}}_t$ allows non-uniform participation ($p_m$ vs. $1/N$).
The implications of such non-uniform participations are discussed in \secref{subsec:Biased_FL}.
By further taking the expectation with respect to the random mini-batch data selection, we obtain
\begin{align}
\mathbb E[\tilde{\boldsymbol{g}}_t|\mathbf w_t] = \sum_{m \in [N]} p_m \nabla f_m(\mathbf w_t). \label{Biased_global_grad}
\end{align}
We next characterize the variance of the OTA-FL global gradient estimation error with respect to its expected value \eqref{Biased_global_grad}; the proof is provided in Appendix B.
\begin{lemma}
\label{OTA_variance_lemma}
Under Assumptions \ref{ass:bounded_loss_grad} and \ref{ass:bounded_stochastic_grad}, the OTA-FL global gradient estimation variance satisfies
$\mathrm{var}(\hat{\boldsymbol{g}}_t|\mathbf w_t)\leq\zeta^A$, with
\begin{align*}
&\zeta^A{\triangleq}
\underbrace{\sum_{m \in [N]} p_{m}^2 G^2_\text{max} \left(\frac{\gamma_{m}}{\alpha_{m}}{-}1 \right)}_{\text{transmission variance}}
+\!\!\!\!\!\!\!\!\!\!\underbrace{\sum_{m \in [N]} p_{m}^2\sigma_m^2}_{\text{mini-batch gradient variance}}
\!\! +\!\!\!\!\!\underbrace{\frac{d N_0}{\alpha^2}}_{\text{noise variance}}\!\!\!\!.
\end{align*}
\end{lemma}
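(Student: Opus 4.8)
\emph{Proof plan.} The plan is to invoke the law of total variance, conditioning on the mini-batch gradients $\{\boldsymbol{g}_{m,t}\}_m$, so as to decouple the three mutually independent sources of randomness in the estimator $\hat{\boldsymbol{g}}_t$ of \eqref{OTA_Signal_model2}: the fading-dependent transmission indicators $\{\chi^A_{m,t}\}_m$ (determined by the independent channels $\{h_{m,t}\}_m$), the receiver noise $\mathbf z_t$, and the per-device mini-batch sampling. Concretely, I would start from
\begin{align*}
\mathrm{var}(\hat{\boldsymbol{g}}_t|\mathbf w_t) = \mathbb E\big[\mathrm{var}(\hat{\boldsymbol{g}}_t|\{\boldsymbol{g}_{m,t}\}_m,\mathbf w_t)\,\big|\,\mathbf w_t\big] + \mathrm{var}(\tilde{\boldsymbol{g}}_t|\mathbf w_t),
\end{align*}
where the conditional mean $\mathbb E[\hat{\boldsymbol{g}}_t|\{\boldsymbol{g}_{m,t}\}_m,\mathbf w_t] = \tilde{\boldsymbol{g}}_t = \sum_{m\in[N]} p_m\boldsymbol{g}_{m,t}$ by \eqref{OTA_exp_global_grad}. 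This reduces the claim to bounding the two terms separately.

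For the inner (conditional) variance, I would observe that, given $\{\boldsymbol{g}_{m,t}\}_m$, the estimator $\hat{\boldsymbol{g}}_t$ is a sum of the terms $\tfrac{\gamma_m}{\alpha}\chi^A_{m,t}\boldsymbol{g}_{m,t}$, independent across $m$ (since $\{h_{m,t}\}_m$ are independent), plus the independent noise $\mathbf z_t/\alpha$. Each $\chi^A_{m,t}$ is Bernoulli with success probability $q_m \triangleq \mathbb P(|h_{m,t}|\ge G_\text{max}\gamma_m/\sqrt{dE_s})$, and since $|h_{m,t}|^2$ is exponential with mean $\Lambda_m$ this gives $q_m = \exp\{-\gamma_m^2 G_\text{max}^2/(d\Lambda_m E_s)\}$, i.e., $\gamma_m q_m = \alpha_m$, hence $q_m = \alpha_m/\gamma_m$. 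Using $\mathrm{var}(X\mathbf v) = \mathrm{var}(X)\|\mathbf v\|^2$ for a scalar random variable $X$ and a fixed vector $\mathbf v$, the inner variance equals $\tfrac{1}{\alpha^2}\sum_m\gamma_m^2 q_m(1-q_m)\|\boldsymbol{g}_{m,t}\|^2 + \tfrac{dN_0}{\alpha^2}$; taking the outer expectation and bounding $\|\boldsymbol{g}_{m,t}\|\le G_\text{max}$ (Assumption \ref{ass:bounded_loss_grad}) yields $\tfrac{G_\text{max}^2}{\alpha^2}\sum_m\gamma_m^2 q_m(1-q_m) + \tfrac{dN_0}{\alpha^2}$. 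Substituting $\gamma_m q_m=\alpha_m$ and $p_m=\alpha_m/\alpha$ turns $\tfrac{\gamma_m^2 q_m(1-q_m)}{\alpha^2}$ into $\tfrac{\alpha_m(\gamma_m-\alpha_m)}{\alpha^2}=p_m^2(\tfrac{\gamma_m}{\alpha_m}-1)$, i.e., the transmission-variance term, with $\tfrac{dN_0}{\alpha^2}$ being the noise term. For the outer term, independence of the mini-batches across devices gives $\mathrm{var}(\tilde{\boldsymbol{g}}_t|\mathbf w_t) = \sum_m p_m^2\,\mathrm{var}(\boldsymbol{g}_{m,t}|\mathbf w_t)\le\sum_m p_m^2\sigma_m^2$ by Assumption \ref{ass:bounded_stochastic_grad}. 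Summing the three pieces delivers $\mathrm{var}(\hat{\boldsymbol{g}}_t|\mathbf w_t)\le\zeta^A$.

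I expect the main subtlety to be the order of conditioning. Conditioning first on $\{\boldsymbol{g}_{m,t}\}_m$ lets the mini-batch contribution enter only through $\mathrm{var}(\sum_m p_m\boldsymbol{g}_{m,t}|\mathbf w_t)$, where the coefficient $\gamma_m^2 q_m^2/\alpha^2 = \alpha_m^2/\alpha^2 = p_m^2$ collapses exactly. By contrast, if one expands $\mathbb E[\|\boldsymbol{g}_{m,t}\|^2|\mathbf w_t] = \|\nabla f_m(\mathbf w_t)\|^2 + \mathrm{var}(\boldsymbol{g}_{m,t}|\mathbf w_t)$ inside the transmission term and then bounds the variance by $\sigma_m^2$, the mini-batch variance is double-counted and its coefficient inflates to $p_m^2\gamma_m/\alpha_m\ge p_m^2$; the clean two-step decomposition above is what makes the stated coefficients tight.
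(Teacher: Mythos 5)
Your proposal is correct and follows essentially the same route as the paper: the law-of-total-variance split conditioning on $\{\boldsymbol{g}_{m,t}\}_m$ is exactly the paper's orthogonal decomposition into the communication-induced term and the mini-batch term, and your Bernoulli-variance computation $\gamma_m^2 q_m(1-q_m)/\alpha^2 = p_m^2(\gamma_m/\alpha_m-1)$ with $\Vert\boldsymbol{g}_{m,t}\Vert\le G_\text{max}$ matches the paper's evaluation of $\mathbb{E}[(\chi^A_{m,t}\gamma_m/\alpha - p_m)^2]\,\mathbb{E}[\Vert\boldsymbol{g}_{m,t}\Vert^2]$. No gaps.
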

We note three terms in the above Lemma:  (1) transmission variance, arising from intermittent local transmissions induced by the threshold-based strategy in \eqref{OTA_indicator}, where devices with poor channel conditions may not transmit; (2) mini-batch gradient variance,
due to random mini-batch selections; and (3) noise variance, stemming from the additive noise at the PS.

Due to concurrent uplink transmissions by the devices, the overall gradient upload time in each OTA-FL round is $\tau = \frac{d}{B}$, independent of the number of devices, where $B$ denotes the communication bandwidth.
\subsection{Digital Transmission}
\label{subsecDIGI}
We now describe the digital uplink model for solving \eqref{FL_prob}, following \cite{Upd_aware_sched,Sched_latency_FL,Sched_policies,Quant_FL_Outage_Constraint,Wireless_Quant_FL_Joint,FL_unreliable_resource_const}. We employ time-division multiple access (TDMA): in each FL round, every participating device is assigned an orthogonal time slot for gradient upload. To reduce communication overhead, local gradients are quantized prior to transmission. Specifically, device \(m\in[N]\) normalizes its gradient as \(\boldsymbol{g}_{m,t}/\|\boldsymbol{g}_{m,t}\|_\infty\), then quantizes each normalized entry using \(r_m\) bits via the dithered stochastic uniform quantizer \cite{Prob_Quant1,Prob_Quant2}, with \(r_m\) fixed throughout training. The device transmits its quantized normalized gradient together with the gradient norm, for a total payload of \(L_m = 64 + d\,r_m\) bits.

To mitigate uplink delay under deep channel fading, we adopt a thresholded participation rule:
\begin{align}
\chi^D_{m,t} \;=\;
\begin{cases}
1, & \text{if } |h_{m,t}| \ge \rho_m,\\
0, & \text{otherwise},
\end{cases}
\label{Digital_indicator}
\end{align}
where \(\rho_m\) is a device-specific threshold. Devices decide participation locally using instantaneous CSI \(h_{m,t}\) (e.g., estimated from a downlink pilot).
\addb{We assume device \(m\) throughout transmits at a fixed data rate $R_m 
\;=\; B\log_2\!\Big(1 + \tfrac{E_s\,\rho_m^2}{N_0}\Big)$
 where \(B\) is the allocated bandwidth,
  \(E_s\) is the average transmit energy per symbol and \(N_0\) is the noise power spectral density. With the rule \(|h_{m,t}|\ge\rho_m\) in \eqref{Digital_indicator}, the instantaneous signal-to-noise ratio (SNR) exceeds $\frac{E_s\,\rho_m^2}{N_0}$, ensuring an outage-free link at data rate $R_m$. When the channel cannot support \(R_m\), the devices simply refrain from participation in a given round.}

Using these orthogonal local gradient transmissions by the devices, the PS estimates the stochastic global gradient as
\begin{align}
\hat{\boldsymbol{g}}_t \;=\; \sum_{m \in [N]} \frac{\chi^D_{m,t}\,\boldsymbol{g}_{m,t}^q}{\nu_m},
\label{Digital_Signal_model2}
\end{align}
where \(\boldsymbol{g}_{m,t}^q\) denotes the reconstruction (at the PS) of device \(m\)’s dithered-quantized gradient, \(\nu_m\) is a device-specific post-scaler at the PS.\footnote{Unlike OTA-FL, orthogonal transmissions in digital FL allow device-specific post-scalers, providing an additional degree of freedom.}
Leveraging the unbiasedness of dithered stochastic uniform quantization \cite{Prob_Quant1,Prob_Quant2,Quant_FL_Outage_Constraint},
we obtain
\begin{align}
\tilde{\boldsymbol{g}}_t \triangleq \mathbb{E}[\hat{\boldsymbol{g}}_t|\{\boldsymbol{g}_{m,t}\}_m]
 =\sum_{m \in [N]} \frac{\beta_m}{\nu_m}\,\boldsymbol{g}_{m,t}
{\triangleq}\!\!\! \sum_{m \in [N]} p_m\,\boldsymbol{g}_{m,t},
\label{Digital_exp_global_grad}
\end{align}
where \(\beta_m \triangleq \mathbb{E}[\chi^D_{m,t}] = \exp\{-\rho_m^2/\Lambda_m\}\) and $p_m \triangleq \beta_m/\nu_m \,,\forall m \in [N]$. As in OTA-FL, we interpret \(p_m\) as the average (digital FL) device participation level and enforce \(0\le p_m\le 1\) and \(\sum_{m} p_m=1\) as a design constraint so that \(\tilde{\boldsymbol{g}}_t\) satisfies \eqref{Biased_global_grad}. The variance of the global-gradient estimation error is characterized next; the proof is provided in Appendix~B.
\begin{lemma}
\label{Dig_variance_lemma}
Under Assumptions \ref{ass:bounded_loss_grad} and \ref{ass:bounded_stochastic_grad}, the digital-FL global gradient estimation variance satisfies 
$\mathrm{var}(\hat{\boldsymbol{g}}_t|\mathbf w_t)\leq\zeta^D$, with
\begin{align*}
\nonumber
&\zeta^D\triangleq
\underbrace{\sum_{m \in [N]} p_{m}^2 G^2_\text{max} \left(\frac{1}{\beta_{m}} - 1 \right)}_{\text{transmission variance}}
\nonumber\\&
+ \!\!\!\underbrace{\sum_{m \in [N]} p_{m}^2 \sigma_m^2}_{\text{mini-batch gradient variance}}
\!\!\!+ \underbrace{\sum_{m \in [N]} p_m^2 G_\text{max}^2 {} \frac{d }{\beta_m(2^{r_m} - 1)^2}}_{\text{quantization noise variance}}.
\end{align*}
\end{lemma}
Similarly to OTA-FL, the variance bound $\zeta^D$ is decomposed into three components: (1) transmission variance, due to intermittent transmissions following the threshold-based approach in \eqref{Digital_indicator}; (2) mini-batch gradient variance, due to mini-batch sampling; and (3) quantization noise variance, due to quantization of mini-batch local gradients.

The uplink latency in round $t$ for device $m$ is
$\tau_{t,m}=\chi_{m,t}^D \frac{L_m}{B R_m}$ ($=0$ if it does not participate with \(\chi^D_{m,t}=0\)). Using \(\mathbb{E}[\chi^D_{m,t}]=\beta_m\), the expected per-round latency is:  
\begin{align}  
\mathbb{E}\Big[\sum_{m \in [N]} \tau_{t,m}\Big] 
= \sum_{m \in [N]} \frac{\beta_m L_m}{B R_m}. \label{Expected_Dig_lat}  
\end{align}  

\subsection{Biased FL}
\label{subsec:Biased_FL}
With the gradient estimates in \eqref{OTA_Signal_model2} and \eqref{Digital_Signal_model2}, the PS updates the model as
\begin{align}
\mathbf{w}_{t+1} \;=\; \mathcal{P}_{\mathcal{W}}\!\left(\mathbf{w}_t - \eta\,\hat{\boldsymbol{g}}_t\right),  \quad t \geq0\,,
\label{Biased_GD}
\end{align}
where $\mathcal{P}_{\mathcal{W}}(\cdot)$ denotes projection onto the feasible set \(\mathcal{W}\).
In both OTA and digital FL schemes, \(\hat{\boldsymbol{g}}_t\) is an unbiased estimator of
\(\tilde{\boldsymbol{g}}_t \triangleq \sum_{m\in[N]} p_m\,\boldsymbol{g}_{m,t}\), so \eqref{Biased_GD} is a noisy-SGD step with \(\tilde{\boldsymbol{g}}_t\) replacing the ideal average gradient \(\overline{\boldsymbol{g}}_t\) in \eqref{Gradagg}. Consequently, the updates minimize, on average, a \emph{re-weighted} objective
\begin{align}
\tilde{F}(\mathbf{w}) \;\triangleq\; \sum_{m\in[N]} p_m\, f_m(\mathbf{w}),
\label{Incons_obj}
\end{align}
rather than the global objective \(F(\mathbf{w})\) in \eqref{FL_prob}. In fact, $\mathbb{E}[\tilde{\boldsymbol{g}}_t|\mathbf w_t ]= \nabla \tilde{F}(\mathbf{w}_t)$ (c.f. \eqref{Biased_global_grad}) with expectation taken over the mini-batch data selection. This induces a \emph{model bias} with respect to \(F\), whose precise form depends on the objective class (strongly convex vs. non-convex), as formalized in Sec.~\ref{Sec:Convergence_Analaysis}.

\begin{remark}  
Prior works on OTA/digital FL (e.g., 
\cite{Dig_vs_Analog,FL_unreliable_resource_const,Sched_policies,Wireless_Quant_FL_Joint,Quant_FL_Outage_Constraint,ADFL,Joint_L_Comm,Upd_aware_sched,FL_fading,OTA_FL,One_bit_FL,OTA_FL_H_data}) either assume wireless homogeneity or enforce a zero-bias strategy, ensuring uniform participation $ p_m = \frac{1}{N}$, so that minimizing \eqref{Incons_obj} becomes equivalent to \eqref{FL_prob}. While effective under homogeneous conditions, both schemes suffer from devices with poor channel quality in heterogeneous settings: in OTA-FL, the worst-channel device becomes the bottleneck (as shown in \cite{BB_FL,Biased_OTA_FL_ICC,Opt_power_control_OTA_FL}), while in digital FL, such devices induce a \textit{straggler effect}, dominating latency under constrained communication resources. \end{remark} 
\vspace{-5mm}
\addb{\begin{remark}
The proposed framework incorporates a fixed, time-invariant bias through $\{p_m\}$, controlled by design variables optimized offline
(OTA: device pre-scaling and PS post-scaling; Digital: thresholds, post-scaling, and quantization bits). The convex-combination estimators in \eqref{OTA_exp_global_grad} and \eqref{Digital_exp_global_grad} provide \emph{structured} control over $\{p_m\}$, and hence over the model bias, in contrast to prior \emph{unstructured} biased schemes where the bias implicitly varies with instantaneous CSI and lacks theoretical guarantees. Uniform participation $p_m{=}\frac{1}{N}$ is a special case of our design.
\end{remark}}

These insights motivate biased OTA and digital designs under wireless heterogeneity. As mentioned above, recent studies (e.g., \cite{Opt_power_control_OTA_FL,OTA_FL_Optimization} for OTA-FL)  consider a biased FL design. Yet, they treat a generic, unstructured bias that offers limited control during training. Building on our prior work \cite{Biased_OTA_FL_ICC} and \cite{icc_ncvx_biased_ota_fl}, we instead adopt a \emph{structured, time-invariant} bias that enables tractable analysis. The proposed framework thus allows a non-zero average bias in the FL updates and exposes a bias-variance trade-off to be jointly optimized. \secref{Sec:Optimal_Design} develops the optimization of this trade-off, leveraging convergence bounds derived next for both strongly convex and non-convex settings.
\section{Convergence Analysis}
\label{Sec:Convergence_Analaysis}
In this section, we theoretically study the convergence behavior of the presented FL schemes. Since both schemes follow the update rule in \eqref{Biased_GD}, we adopt a unified convergence framework, where average device participation levels are given by $p_m{=}\frac{\alpha_m}{\alpha}$ or $p_m{=} \frac{\beta_m}{\nu_m}$, and the variance of the global gradient estimation is captured in Lemmas \ref{OTA_variance_lemma} and \ref{Dig_variance_lemma}, for the OTA and digital schemes, respectively. 
\addb{We now state the assumptions used to establish convergence for the two cases of interest, namely, the strongly convex case and the non-convex case.}
\begin{Ass}[Both cases] 
\label{ass:smooth_lb} Each local objective function  $f_m(\cdot)$ 
is $L$-smooth, that is, for all $m \in [N]$, $f_m$ satisfies
$$
\|\nabla f_m(\mathbf{x})-\nabla f_m(\mathbf{y})\|\le L\|\mathbf{x}-\mathbf{y}\|,\quad \forall\,\mathbf{x},\mathbf{y}\in\mathbb{R}^d,
$$
and is lower bounded, that is, there exists $f_m^{\inf}\in\mathbb{R}$ such that $f_m(\mathbf{w})\ge f_m^{\inf}\,,\forall\, \mathbf{w}\in\mathbb{R}^d$. 
Consequently, any convex combination $\sum_{m\in[N]} p_m f_m(\cdot)$ (including $F$ with $p_m=1/N$ and $\tilde F$) is $L$-smooth and lower bounded by $\sum_{m\in[N]} p_m f_m^{\inf}$.
\end{Ass}

\begin{Ass}[Strongly convex case]
\label{ass:sc}
Each local objective function  $f_m(\cdot)$ 
is $\mu$-strongly convex, that is 
$$f_m(\mathbf{y}) \geq f_m(\mathbf{x}) + \nabla f_m(\mathbf{x})^\top(\mathbf{y} - \mathbf{x}) + \frac{\mu}{2}\| \mathbf{y} - \mathbf{x} \|^2,  
$$
for all $\mathbf{x},\mathbf{y} \in \mathbb{R}^d$.  Hence, any convex combination \(\sum_{m \in [N]} p_m f_m(\cdot)\) (including \(F\) and \(\tilde F\)) is \(\mu\)-strongly convex.
\end{Ass}
\vspace{-6mm}
\addb{\begin{Ass}[Non-convex case] 
\label{ass:bounded_data_hetr}
The variance of local gradients with respect to the global gradient is bounded, that is, there exists $\kappanc >0 $ such that $\frac{1}{N}\sum_{m \in [N]}
 \Vert \nabla f_m(\mathbf{w})-\nabla F(\mathbf{w})\Vert^2 \leq \kappanc^2,\forall \,\mathbf{w} \in \mathbb{R}^d$. Under Assumption \ref{ass:bounded_loss_grad}, it further follows that $\kappanc \leq 2 G_\text{max}$. \end{Ass}
Assumptions \ref{ass:smooth_lb} (smoothness) is standard in FL analyses (e.g., \cite{niid_fedavg,OTA_FL_H_data,OTA_FL_Optimization}), with lower-boundedness particularly typical for non-convex objectives (e.g., \cite{One_bit_FL,OTA_FL_Optimization,Quant_FL_Outage_Constraint}). Assumption~\ref{ass:sc} was adopted in \cite{niid_fedavg,OTA_FL_H_data,OTA_FL_Optimization}. Finally, Assumption \ref{ass:bounded_data_hetr} (also called bounded gradient dissimilarity or data divergence) is also widely adopted in non-convex optimization analyses (e.g., \cite{P_FL_bounded_niid,Quant_FL_Outage_Constraint}).}

\begin{remark}  
While prior works \cite{niid_fedavg,OTA_FL_H_data,Upd_aware_sched} assume uniform boundedness of local gradients over $\mathbb R^d$, this assumption contradicts the strong convexity of local objectives, as noted in 
\cite{SGD_Hogwild}. The projection step in our FL updates
resolves this discrepancy by
ensuring $\mathbf{w}_t \in \mathcal{W}$, thereby requiring sample-wise gradient boundedness only over $\mathcal{W}$ (Assumption \ref{ass:bounded_loss_grad}). This condition is easily satisfied in practice, e.g., for smooth losses. \end{remark}


Let \(\mathbf w^\star \in \arg\min_{\mathbf w} F(\mathbf w)\) and \(\tilde{\mathbf w}\in \arg\min_{\mathbf w} \tilde F(\mathbf w)\). In the strongly convex case, these are unique global minimizers; in the non-convex case, they represent stationary solutions. \addb{We measure convergence by the \emph{optimality error} \(\mathbb E[\|\mathbf w_t-\mathbf w^\star\|^2]\) in the strongly convex case, and by the (finite-time) \emph{average stationarity} \(\frac{1}{T}\sum_{t=0}^{T-1}\mathbb E[\|\nabla F(\mathbf w_t)\|^2]\) in the non-convex case.}

We are now ready to present the main convergence results; full proofs are detailed in Appendix A.


\begin{theorem}[Strongly convex case]
\label{thm:main_convergence_sc} 
Under 
Assumptions \ref{ass:bounded_loss_grad}, \ref{ass:bounded_stochastic_grad}, \ref{ass:smooth_lb}, and \ref{ass:sc}, a fixed learning step size $\eta \in \Big(0, \frac{2}{\mu + L}\Big]$, and $\mathbf{w}_0 \in \mathcal{W}$,
and with $\mathcal W\equiv\{\mathbf w\in\mathbb R^d:\Vert\mathbf w\Vert\leq D/2\}$ and $D\triangleq2\max_{m\in[N]}\frac{1}{\mu} \Vert \nabla f_m(\mathbf 0) \Vert$,
the optimality error after $t$ FL rounds satisfies
\begin{equation*}
\begin{split}
&\mathbb{E}[\Vert\mathbf{w}_t - \mathbf{w}^*\Vert^2] \leq \underbrace{2D^2\left(1-\eta \mu\right)^{2t}}_{\text{initialization error}} \\
& + \underbrace{2\frac{N {\kappasc}^2}{\mu^2}\sum_{m \in [N]} \left(\frac{1}{N} - p_{m}\right)^{2}}_{\text{model bias}} \;\;+ \!\!\!\!\underbrace{2\frac{\eta}{\mu} \zeta,}_{\text{gradient estimation variance}}
\end{split}
\end{equation*}
where \(\zeta\) is provided in Lemmas \ref{OTA_variance_lemma} and \ref{Dig_variance_lemma} for OTA and digital schemes, respectively and 
$\kappasc^2\triangleq\frac{1}{N}\sum_{m \in [N]} \left\| \nabla f_m(\mathbf{w}^*)\right\|^2$. 
\end{theorem}
\addb{
\begin{theorem}[Non-convex case]
\label{thm:main_convergence_ncvx}
Under Assumptions \ref{ass:bounded_loss_grad}, \ref{ass:bounded_stochastic_grad}, \ref{ass:smooth_lb}, and \ref{ass:bounded_data_hetr}, and a fixed learning step size $\eta \in \left(0,\frac{1}{L}\right]$ and $\mathbf{w}_0 \in \mathcal{W}$ with $\mathcal{W} \equiv \mathbb{R}^d$, after $T$ FL rounds it holds that
\begin{align*}
    \frac{1}{T}\sum_{t=0}^{T-1}\mathbb{E}\big[\|\nabla F(\mathbf{w}_t)\|^2\big]
    \le & \underbrace{\frac{4\,\max_{m\in[N]} (f_m(\mathbf{w}_0) - f_m^{\inf})}{\eta\,T}}_{\text{initialization error}} \nonumber \\
     &\hspace{-30mm}+ \underbrace{2N\kappanc^2 \sum_{m \in [N]}  \Big(p_m-\frac{1}{N}\Big)^2}_{\text{model bias}}   +  \underbrace{2 \eta L \zeta}_{\text{gradient estimation variance}} ,
\end{align*}
with $\zeta$ given in Lemmas~\ref{OTA_variance_lemma}-\ref{Dig_variance_lemma}.
\end{theorem}
}
The FL convergence bounds in Theorem \ref{thm:main_convergence_sc} and \ref{thm:main_convergence_ncvx} for the strongly convex and non-convex case, respectively, characterize the behavior of the proposed biased OTA-FL and digital FL through three key terms: (1) initialization error, (2) model bias, and (3) global gradient estimation variance. The FL initialization error term is standard and decays geometrically in the strongly convex case and as \(O\!\left(1/(\eta T)\right)\) in the non-convex case. The model-bias term arises due to the (possibly non-uniform) device participation levels $\{p_m\}$. The parameter $\kappasc/\kappanc$ in the bias term quantifies the degree of data heterogeneity across devices. Therefore, the bias vanishes under either uniform participation $(p_m=1/N)$ or when the devices' local objectives are identical $(\kappasc/\kappanc=0)$. The gradient estimation variance captures the noisy estimate of the global gradient via OTA/digital communication, detailed in Lemmas \ref{OTA_variance_lemma} and 
\ref{Dig_variance_lemma}. \addb{Crucially, both strongly convex and non-convex regimes share the same bias-variance structure, unifying the analysis across OTA and digital schemes and across objective classes.} Unlike prior work, our framework makes the impact of a \emph{structured, tunable} bias explicit through the $\{p_m\}$-dependent term, revealing a bias-variance trade-off to accelerate convergence. This trade-off calls for careful optimization of associated design parameters, developed in the next section.

\vspace{-3mm}
\section{Optimal Biased FL Design}
\label{Sec:Optimal_Design}
The bounds in Theorems~\ref{thm:main_convergence_sc} and \ref{thm:main_convergence_ncvx} yield important design insights. For OTA-FL, while decreasing $\{\gamma_m\}$ reduces transmission variance and model bias, it leads to noise amplification. Conversely, minimizing noise variance may lead to larger model bias due to non-uniform device participation. Similarly, for digital FL, enforcing uniform participation (zero bias) by designing $\rho_m$ and $\nu_m$ can worsen quantization noise variance and FL round latency, whereas minimizing quantization noise can introduce large bias. These trade-offs motivate a joint optimization of design parameters for improved FL convergence, developed in this section. \addb{Since the strongly convex and non-convex regimes share the same bias-variance structure, we adopt a unified parameter design framework. Notably, our optimization uses only \emph{statistical} (large-scale) CSI and is performed \emph{once offline} before training, avoiding the per-round global CSI acquisition and/or optimization assumed in several existing schemes \cite{Upd_aware_sched,Sched_policies,Wireless_Quant_FL_Joint,BB_FL,OTA_FL,Opt_power_control_OTA_FL}.}

\vspace{-2mm}
\subsection{OTA-FL optimization
}
\label{OTA_FL_optimization_design}
\addb{For the OTA-FL parameter design, 
exploiting the dependence among variables of interest, the minimization of the convergence bound is stated as
}
\begin{subequations}
\begin{align}
&\min_{\{\gamma_m\}, \{p_m\}, \alpha} \quad 
\addb{\omega_{\mathrm{var}}}
\Bigg( \sum_{m \in [N]} p_m^2 G_\text{max}^2 \left( \frac{\gamma_m}{\alpha p_m} - 1 \right)
+ \frac{d N_0}{\alpha^2} \nonumber\\ 
&\quad + 
\sum_{m \in [N]} p_m^2 \sigma_m^2
\Bigg)
+ 
\addb{\omega_{\mathrm{bias}}} \sum_{m \in [N]} \left( \frac{1}{N} - p_m \right)^2, \label{OTA_objective}\\
&\text{s.t.}\,\bullet \gamma_m \, \exp\Big\{-\frac{\gamma_m^2 G_\text{max}^2}{d \Lambda_m E_s}\Big\} = \alpha p_m, \quad \forall m \in [N], \label{OTA_alpha_m}\\
&\quad\;\bullet 0 \le \gamma_m \le \gamma_{m,\text{max}}, \quad \forall m \in [N], \label{OTA_gamma_pos}\\
&\quad\;\bullet 0 \le \alpha \le \min_{m \in [N]} \frac{\alpha_{m, \text{max}}}{p_m}, \label{OTA_alpha_max}\\
&\quad\;\bullet 0 \leq p_m \leq 1, \, \forall m \in [N],\ \sum_
{m'\in[N]} p_{m'} = 1,\label{OTA_p_pos} 
\end{align}
\label{OTA_re_wrriten_P}
\end{subequations}

\noindent corresponding to the convergence bounds from Theorems~\ref{thm:main_convergence_sc}-\ref{thm:main_convergence_ncvx} (strongly convex and non-convex cases, respectively), with the variance term $\zeta$ given in Lemma~\ref{OTA_variance_lemma}, and 
without the initialization error since it does not affect the optimizer. \addb{Here, $\omega_{\mathrm{var}}$ and $\omega_{\mathrm{bias}}$ in \eqref{OTA_objective} weight the variance and bias terms, respectively.}\footnote{\addb{For the strongly convex case, $(\omega_{\mathrm{var}},\,\omega_{\mathrm{bias}})\equiv\big(\eta/\mu,\; N\,\kappasc^{2}/\mu^{2}\big)$, whereas for the non-convex case, $(\omega_{\mathrm{var}},\,\omega_{\mathrm{bias}})\equiv\big(\eta L,\; N\,\kappanc^{2}\big)$.}}
 The optimization problem in \eqref{OTA_re_wrriten_P} jointly optimizes $\{\gamma_m\}$, $\{p_m\}$, and $\alpha$. Specifically, the constraint \eqref{OTA_alpha_m} arises from the definitions of $\alpha_m=\alpha p_m$.
 Next, observe that $\alpha_m$ in \eqref{OTA_alpha_m} is quasi-concave in $\gamma_m$, with its maximum given by $\alpha_{m, \text{max}} = \sqrt{\frac{d \Lambda_m E_s}{2e G_\text{max}^2 }}$, hence, the constraint \eqref{OTA_alpha_max} ensures $\alpha_m \leq \alpha_{m, \text{max}}$ for all $m \in [N]$. 
Since \eqref{OTA_objective} is increasing in $\gamma_m$ and  \eqref{OTA_alpha_m} admits two roots $\gamma_{m,1}\le \gamma_{m,\text{max}}\le \gamma_{m,2}$, where $\gamma_{m,\text{max}} \triangleq \arg \max_{\gamma_m} \alpha_m(\gamma_m) = \sqrt{\frac{d \Lambda_m E_s}{2 G_\text{max}^2}}$, it is without loss of optimality to restrict $\gamma_m\le \gamma_{m,\text{max}}$ as in \eqref{OTA_gamma_pos}. Finally, \eqref{OTA_p_pos} constrains $\{p_m\}$ to the probability simplex, guaranteeing a well-controlled and structured model bias. 

The objective is non-convex, due to the first and third terms in \eqref{OTA_objective} and the nonlinear constraints \eqref{OTA_alpha_m}–\eqref{OTA_alpha_max}.
Furthermore, the differing scales of these terms can make the problem ill-conditioned. First-order methods (e.g., projected gradient descent~\cite{Nesterov_book}) can thus perform suboptimally.
To address this, we adopt a successive convex approximation (SCA) approach \cite{Inner_approx,MM_comms,SCA_book}: we iteratively solve convex surrogates of the original problem by linearizing the non-convex components around the current iterate. This process is guaranteed to converge to a stationary point of the original non-convex problem. 

To this end, we iteratively convexify the problem for $k = 0, 1, \dots, K-1$ by linearizing around the current iterate anchors $\{\overline{\gamma}_m\}$, $\{\overline{p}_m\}$, and $\overline{\alpha}$ at iteration $k$.\footnote{For brevity, we omit the dependence on iteration index $k$.}
To convexify at iteration $k$, first, we reformulate \eqref{OTA_objective} using an epigraph transformation by introducing auxiliary variables $\{z_m\}$ such that $\frac{p_m \gamma_m}{\alpha} \leq z_m$ for all $m \in [N]$, and linearize the concave term $-p_m^2$ around $\overline{p}_m$. Next, we obtain a convex relaxation of the new constraints $\frac{p_m \gamma_m}{\alpha} \leq z_m$ by taking logarithms and linearizing $\ln p_m$ and $\ln\gamma_m$ around $(\overline{p}_m,\overline{\gamma}_m)$, yielding \eqref{OTA_z_epi}. For \eqref{OTA_alpha_m}, we first relax the equality to an inequality, take logarithms, and then linearize around $(\overline{\alpha},\overline{p}_m)$ to obtain a convex constraint \eqref{OTA_alpha_m_conv}. Finally, the bound in \eqref{OTA_alpha_max} is expressed as $\max_{m \in [N]} \frac{p_m}{\alpha_{m, \text{max}}} \leq \frac{1}{\alpha}$, with the right-hand side linearized around $\overline{\alpha}$ to get \eqref{OTA_alpha_max_conv}. 
These steps yield a convex surrogate of the optimization problem in \eqref{OTA_re_wrriten_P} solved at each SCA iteration:
\begin{subequations}
\begin{align}
 &\min_{\{\gamma_m\},  \{p_m\}, \{z_m\}, \alpha }    \addb{\omega_{\mathrm{var}}}\Bigg(\sum_{m \in [N]}  G_\text{max}^2    \,z_m \!+ \! \frac{ d N_0}{\alpha^2} \!+\!\!\!\!\sum_{m \in [N]}p_m^2\sigma_m^2 \nonumber \; \\ -  &\!\!\!\sum_{m \in [N]} \!\!G_\text{max}^2 \overline{p}_m(2 p_m - \overline{p}_m )\Bigg) \!+\! \addb{\omega_{\mathrm{bias}}}\!\!\!\sum_{m \in [N]}\!\!\left(p_m{-}\frac{1}{N} \right)^2\!\!\!,\label{OTA_objective_cvx}\\
&\text{s.t.}\ \forall m \in [N]:\nonumber\\&
\bullet\ 
\ln(\overline{\gamma}_m \overline{p}_m)+ \frac{\gamma_m}{\overline{\gamma}_m}+ \frac{p_m}{\overline{p}_m}-2 \leq \ln z_m + \ln \alpha,
\label{OTA_z_epi}\\
&  \bullet
\ln(\overline{\alpha}\ \overline{p}_m)+ \frac{\alpha}{\overline{\alpha}}+ \frac{p_m}{\overline{p}_m}-2 \leq \ln \gamma_m - \frac{ \gamma_{m}^2 G_\text{max}^2}{d \Lambda_m  E_s} \label{OTA_alpha_m_conv},
\\
&  \bullet 
\frac{p_m}{\alpha_{m, \text{max}}} \leq \frac{2\overline{\alpha}-\alpha}{(\overline{\alpha})^2} \,,
\alpha \geq 0 ,\label{OTA_alpha_max_conv} \\
& \bullet \eqref{OTA_gamma_pos}, \eqref{OTA_p_pos} \nonumber.
\end{align} 
\label{OTA_convexified} 
\end{subequations}
This problem can be efficiently solved using numerical solvers such as CVX \cite{cvx}. The original problem \eqref{OTA_re_wrriten_P} is then tackled by successively solving \eqref{OTA_convexified} with updated linearization anchors: initialize $\{\overline{\gamma}_m^{(0)}\}$, $\{\overline{p}_m^{(0)}\}$, and $\overline{\alpha}^{(0)}$ (e.g., via a low-complexity heuristic; two choices will be listed next), solve \eqref{OTA_convexified}, set the solution as $(\overline{\gamma}_m^{(k+1)},\overline{p}_m^{(k+1)},\overline{\alpha}^{(k+1)})$, and iterate for $K$ rounds.

We highlight that the proposed SCA framework generalizes our earlier heuristics, minimum noise variance, and zero-bias minimum noise variance, proposed in \cite{Biased_OTA_FL_ICC}. For algorithmic details of these designs, we refer the reader to our prior work \cite{Biased_OTA_FL_ICC}.
\label{OTA_FL_optimization_design}
\vspace{-3mm}
\subsection{Digital FL Optimization}
We now optimize the digital-FL parameters by minimizing the convergence bound (c.f. Theorems \ref{thm:main_convergence_sc} and \ref{thm:main_convergence_ncvx}) with the variance term from Lemma~\ref{Dig_variance_lemma}. Let $\mathcal{X}\!\triangleq\!\{\{\rho_m\},\{\beta_m\},\{p_m\},\{\nu_m\},\{r_m\},\{R_m\}\}$ denote the collection of nonnegative design variables. Exploiting the dependencies among variables and dropping the initialization term, we write:
\begin{subequations}
\label{Dig_design}
\begin{align}
\min_{\mathcal{X}\geq 0} &
\quad \addb{\omega_{\mathrm{var}}} \Bigg(
\sum_{m \in [N]} p_{m}^2 G^2_\text{max} \left(\frac{1}{\beta_{m}} - 1 +\frac{d }{\beta_m(2^{r_m} - 1)^2}\right)
\nonumber\\ &\quad
+ \sum_{m \in [N]} p_m^2 \sigma_m^2\Bigg)+ \addb{\omega_{\mathrm{bias}}} \sum_{m \in [N]} \left( \frac{1}{N} - p_m \right)^2,
\label{Dig_design_obj} \\ 
& \text{s.t.}\ \forall m \in [N]:\nonumber\\&
\bullet \sum_{m' \in [N]} \frac{(64 + d r_{m'})}{B R_{m'}} \beta_{m'} \leq T_\text{max}, \label{FL_delay_constraint} \\
& \bullet R_m=\log_2\!\big(1\!+\!\tfrac{E_s \rho_m^2}{N_0}\big),
\beta_m=\exp\{-\tfrac{\rho_m^2}{\Lambda_m}\}, \label{Dig_beta_R_m_beta_m_def}\\
& \bullet p_m = \frac{\beta_m}{\nu_m}\,\,, 0 \leq p_m \leq 1\,,\sum_{m'\in[N]} p_{m'} = 1, \label{Dig_p_m_def_p_sum_1}
\\
& \bullet r_m \in \{1, 2, \dots\}\,, \label{r_int}
\end{align}
\end{subequations}
\addb{where, recall, $\omega_{\mathrm{var}}$ and $\omega_{\mathrm{bias}}$ are the weights of the variance and bias terms, respectively, as in \eqref{OTA_objective}. 
} Here, the latency constraint \eqref{FL_delay_constraint} follows from the expected per-round delay (c.f. \eqref{Expected_Dig_lat}), and \eqref{Dig_beta_R_m_beta_m_def} captures the SNR-rate-threshold coupling. Constraints in  \eqref{Dig_p_m_def_p_sum_1} follow from the definitions of $\beta_m$ and $p_m$ and ensure that $\{p_m\}$ lie on the probability simplex to control model bias. 
Finally, \eqref{r_int} enforces feasibility for choosing the number of bits $r_m$ to quantize local gradients. The resulting optimization problem in \eqref{Dig_design} is mixed-integer and highly non-convex, so we employ an SCA approach. Using \eqref{Dig_beta_R_m_beta_m_def} and \eqref{Dig_p_m_def_p_sum_1} to write $\beta_m=p_m\nu_m$ and $\rho_m=\sqrt{-\Lambda_m\ln(p_m\nu_m)}$, we reduce the search to $\mathcal{X}'=\{\{p_m\},\{\nu_m\},\{r_m\},\{R_m\}\}$. 
We then introduce auxiliary variables $\{z_m\}$ and $\{\varpi_m\}$ with 
$\frac{p_m}{\nu_m}\le z_m$ and $\frac{p_m}{\nu_m(2\cdot 2^{r_m'}-1)^2}\le \varpi_m$, 
and linearize the concave term $-p_m^2$ around $\overline p_m$, yielding the 
convexified objective \eqref{Dig_objective_cvx}. The constraints 
$\frac{p_m}{\nu_m}\le z_m$ and $\frac{p_m}{\nu_m(2\cdot 2^{r_m'}-1)^2}\le \varpi_m$ 
are convexified via a log transform and first-order expansions of $\ln p_m$ around 
$\overline p_m$, giving \eqref{Dig_z_epi} and \eqref{quantization_noise_conv}. We relax \eqref{r_int} by optimizing over continuous $r_m'$ and set $r_m=\lfloor r_m' \rfloor + 1$ post-optimization. For the delay constraint, we introduce $\{t_m\}$ with $\frac{(64+d(r_m'+1))\,\nu_m p_m}{B R_m}\le t_m$ (\eqref{Dig_T_max}) and convexify by taking logs and linearizing $\ln(64+d(r_m'+1))$, $\ln\nu_m$, and $\ln p_m$ around $(\overline r_m',\overline \nu_m,\overline p_m)$ to obtain \eqref{FL_delay_constraint_conv}. The rate constraint in \eqref{Dig_beta_R_m_beta_m_def} is relaxed using ${\rho_m^2=-\Lambda_m\ln(p_m\nu_m)}$ and linearizations of $\ln p_m$ and $\ln\nu_m$, producing \eqref{Dig_R_m_conv}. Finally, since $\beta_m\le 1$ implies $\nu_m\le 1/p_m$, we enforce the linearized bound around $\overline{p}_m$ \eqref{Dig_v_m_conv}. These steps lead to the following convex surrogate in \eqref{Dig_convexified} solved at each SCA iteration:
\begin{subequations}
\begin{align}
&\min_{\mathcal{X}' \ge 0}  \addb{\omega_{\mathrm{var}}}\Big(\sum_{m \in [N]}  
G_\text{max}^2 (z_m + d \varpi_m) + \sum_{m \in [N]} p_m^2 \sigma_m^2
\nonumber\\
&-\!\!\!\sum_{m \in [N]}\!\!\!G_\text{max}^2 \overline{p}_m(2 p_m{-} \overline{p}_m )\Big){+}\addb{\omega_{\mathrm{bias}}}\!\!\sum_{m \in [N]}\!\!\!\Big(p_m{-}\frac{1}{N} \Big)^2\!\!,\label{Dig_objective_cvx}\\
& \text{s.t.}\ \forall m \in [N]:\nonumber\\&
\bullet \ln \overline{p}_m + \frac{p_m - \overline{p}_m}{ \overline{p}_m} \leq \ln {z}_m + \ln{\nu_m}, \label{Dig_z_epi}\\
& \bullet \ln \overline{p}_m{+}\frac{p_m {-}\overline{p}_m}{ \overline{p}_m} \leq \ln {\varpi}_m {+}\ln{\nu_m}{+}2 \ln (2 \cdot 2^{r_m' }{-}1), \label{quantization_noise_conv} \\
& \bullet \ln \overline{\nu}_m + \ln (64 + d + d\overline{r}_m') + \ln{\overline{p}_m} + \frac{\nu_m - \overline{\nu}_m}{ \overline{\nu}_m }  
\nonumber\\&
+\frac{d (r_m' -\overline{r}_m' )}{64 + d + d\overline{r}_m'} + \frac{p_m - \overline{p}_m}{ \overline{p}_m}\ \! \!\leq 
\ln (t_m) +\ln (R_m B), \label{FL_delay_constraint_conv}\\
& \bullet 2^{R_m} \leq 1 - \frac{\Lambda_m E_s }{N_0} \Big(\ln \overline{\nu}_m{+}\frac{\nu_m}{ \overline{\nu}_m}{+}\ln \overline{p}_m{+}\frac{p_m}{\overline{p}_m}{-}2
\Big), \label{Dig_R_m_conv}\\
& \bullet \sum_{m' \in [N]} t_{m'} \leq  T_\text{max}, \label{Dig_T_max}\\
& \bullet \;0 \leq \nu_m \leq  \frac{2\overline{p}_m - p_m} { \overline{p}_m^2},\label{Dig_v_m_conv}\\
&\bullet \;0 \leq p_m \leq 1, \quad \sum_{m'\in[N]} p_{m'} = 1.
\end{align} 
\label{Dig_convexified} 
\end{subequations}
The surrogate problem in \eqref{Dig_convexified} is convex and can be solved efficiently, e.g., via CVX \cite{cvx}, following a similar procedure described for the OTA-FL problem in \eqref{OTA_convexified}.  

\label{Dig_FL_optimization_design}
\vspace{-2mm}
\section{Numerical Results}
\label{Sec:Numerical_Results}
In this section, we perform numerical experimentation to evaluate the performance of our proposed schemes in both strongly convex and non-convex settings. 
We study two image classification problems on the widely used MNIST \cite{MNIST} and \addb{CIFAR-10} \cite{cifar10} datasets with $C$ = 10 classes.
We consider a wireless FL system with \addb{$N\in\{10,50\}$} devices uniformly deployed in a disk of radius 
$\varrho_{\max} = 1750$
m, with the PS at the center.  \addbb{Specifically, device locations are drawn i.i.d.\ uniformly over the disk area. For each device $m$, we sample a polar angle $\theta_m\in[0,2\pi)$ and a radius $s_m\in[0,\varrho_{\max}]$ according to
$\theta_m\sim\mathrm{Unif}[0,2\pi)$ and $s_m=\varrho_{\max}\sqrt{U_m}$ with $U_m\sim\mathrm{Unif}[0,1]$, independently across $m$. Given $s_m$, we compute the large-scale channel gain $\Lambda_m=\mathbb{E}[|h_{m,t}|^2]$ via the log-distance path-loss model $
PL(s_m)=PL_0 + 10\Omega\log_{10}(s_m/s_0)\ \text{(dB)}$,
with path-loss exponent $\Omega=2.2$, and $PL_0=50$ dB at a reference distance $s_0 = 1$ m. We then set
$\Lambda_m = 10^{-PL(s_m)/10}$.
Small-scale channel fading and PS noise are drawn independently across rounds.
}
The communication bandwidth is $B$ = 1 MHz with carrier frequency $f_c = 2.4$ GHz, and the transmission power is set to $P_\text{tx} =$ 0 dBm. The noise power spectral density at the PS is 
$N_0 = -173$ 
dBmW/Hz.  


For FL tasks on both datasets, we use the regularized cross-entropy loss function $\phi$ to define each local objective $f_m$. To emulate a practical FL scenario,
each device holds a local dataset with a limited number of samples (1000 for MNIST, and 100 for CIFAR-10). We consider two challenging non-i.i.d.\ partitions (specified in the corresponding subsections): \emph{single-class per device} and \emph{two-classes per device}.
Because each device sees only a narrow set of classes, these extreme partitions make cross-device collaboration necessary for accurate classification. With limited samples at each device, each device computes the gradient using its full dataset, i.e., $|\mathcal{B}_{m,t}| = |\mathcal{D}_m|, \forall t$, resulting in no mini-batch gradient variance ($\sigma_m^2 = 0$ for all $m \in [N]$) in our simulations. \addbb{Reported curves show mean $\pm$ standard deviation over multiple Monte Carlo trials with independent channel fading and noise realizations, for a fixed device deployment (i.e., fixed $\{\Lambda_m\}$).}
Throughout, step sizes for all schemes are tuned via a small grid search. We organize results by objective class to mirror the theory, with both the strongly convex and non-convex cases under the two wireless FL schemes, as presented next.
\vspace{-4mm}
\subsection{Strongly Convex FL Task}
\label{subsec:strongly_convex_results}
To verify Theorem \ref{thm:main_convergence_sc}, we perform softmax regression (single linear layer) on the MNIST \cite{MNIST} dataset with ten classes. We consider a challenging \emph{single-class per device} data-heterogeneous (non-i.i.d.) setting, where each device's local dataset consists of all the samples of only one class. The FL model parameter is $d {=} 7850$-dimensional with $\mathbf{w}^\top = \begin{bmatrix} {\mathbf{w}^{(0)}}^\top, \cdots, {\mathbf{w}^{(9)}}^\top \end{bmatrix}$. Here, $\mathbf{w}^{(\ell)}$ is the sub-parameter associated with class $\ell$, for $\ell = 0, \cdots, 9$. The per-sample loss is

\begin{align*}
    \phi(\mathbf{w}, (\boldsymbol{x}, \ell)) = \frac{\mu}{2} \Vert \mathbf{w} \Vert^2 - \ln \Big( \frac{\exp{\{ \boldsymbol{x}^\top \mathbf{w}^{(\ell)} \}}}{\sum_{c = 0}^9 \exp{\{ \boldsymbol{x}^\top \mathbf{w}^{(c)} \}}} \Big). \end{align*}
With this, each local objective function $f_m$ is $\mu$-strongly convex and $L = 2 + \mu$-smooth \cite{NCOTA}. Next, we report two sets of results, one per communication modality (OTA and digital).

\subsubsection{Comparison with State-of-the-Art (SOTA) OTA-FL Schemes}
To demonstrate the effectiveness of our analysis, we compare the proposed OTA-FL framework with several SOTA OTA-FL schemes, adapted to our settings to ensure a fair evaluation. For details, we refer to the respective papers.
$\bullet$ \underline{\textit{Optimized Power Control: OTA Computation (OPC OTA-}} \underline{\textit{Comp)}} \cite{Opt_Power_Control_OTA_Comp}, minimizes the MSE distortion for an OTA-based sum computation task by optimizing the pre-scalers $\{\gamma_m\}$ and PS post-scaler $\alpha$.
The power control design for optimal $\{\gamma_m\}$ and $\alpha$ requires \emph{global instantaneous CSI in each FL round}, unlike our proposed scheme requiring only local instantaneous CSI.\\
$\bullet$ \underline{\textit{Low-Complexity Power Control: OTA Computation}},
\underline{\textit{(LCPC OTA-Comp)}} \cite{Opt_Power_Control_OTA_Comp}. It is a low-complexity scheme that follows a truncated channel inversion OTA power control, where all devices use the same tunable pre-scaler.
LCPC OTA-Comp optimizes the MSE, averaged with respect to channel fading, and hence does not require global instantaneous CSI for power control design.
\\
$\bullet$ \underline{\textit{Optimized Power Control: OTA-FL (OPC OTA-FL)}} \cite{Opt_power_control_OTA_FL}. 
It simplifies the OTA-FL design by considering only the device pre-scaler (no PS post-scaler). Assuming CSI knowledge of \emph{all future rounds}, it solves an optimization problem over $\{\gamma_m\}$ to minimize the FL sub-optimality gap over $T$ rounds. For this reason, we label it as \emph{genie-aided}. Notably, OPC OTA-FL does not impose a zero-bias design constraint. \\
$\bullet$ \underline{\textit{Vanilla OTA-FL}} \cite{OTA_FL} is the classical channel inversion-based OTA power control strategy. By assigning the same pre-scaler to each device, Vanilla OTA-FL ensures zero instantaneous bias. However, it requires  \emph{global instantaneous CSI in each FL round at the PS} to design the common pre-scaler.\\
$\bullet$ \underline{\textit{BB-FL Interior}} \cite{BB_FL}, is a low-complexity scheme that
schedules only the devices within a chosen radius $\varrho_\text{in} < \varrho_\text{max}$ to participate in OTA-FL. The participating devices employ truncated channel inversion to upload their local gradients. \\
$\bullet$ \underline{\textit{BB-FL Alternative}} \cite{BB_FL}, is a low-complexity scheme enabling participation of both cell-edge devices with weak average channel gains and cell-interior devices in FL training. It achieves this by randomly alternating between full device participation (scheduling every device) and the BB-FL Interior policy and uses truncated channel inversion power control.

\begin{figure*}[t]
  \centering
  \begin{subfigure}[t]{0.4\textwidth}
    \centering
    \includegraphics[width=\linewidth]{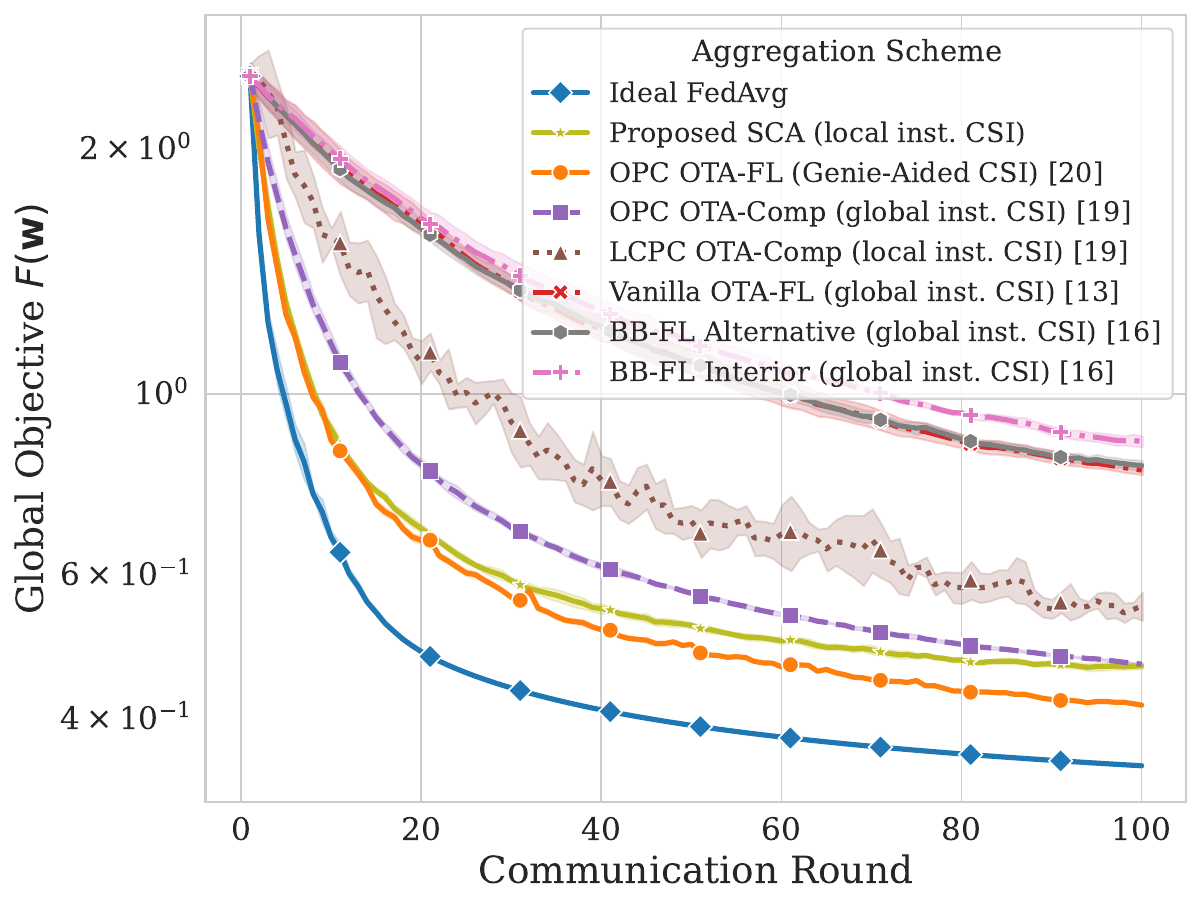}
    \caption{OTA-FL: global objective $F(\mathbf{w})$ vs. FL rounds}
    \label{fig:sc-ota-loss}
  \end{subfigure}
  \quad 
  \begin{subfigure}[t]{0.4\textwidth}
    \centering
    \includegraphics[width=\linewidth]{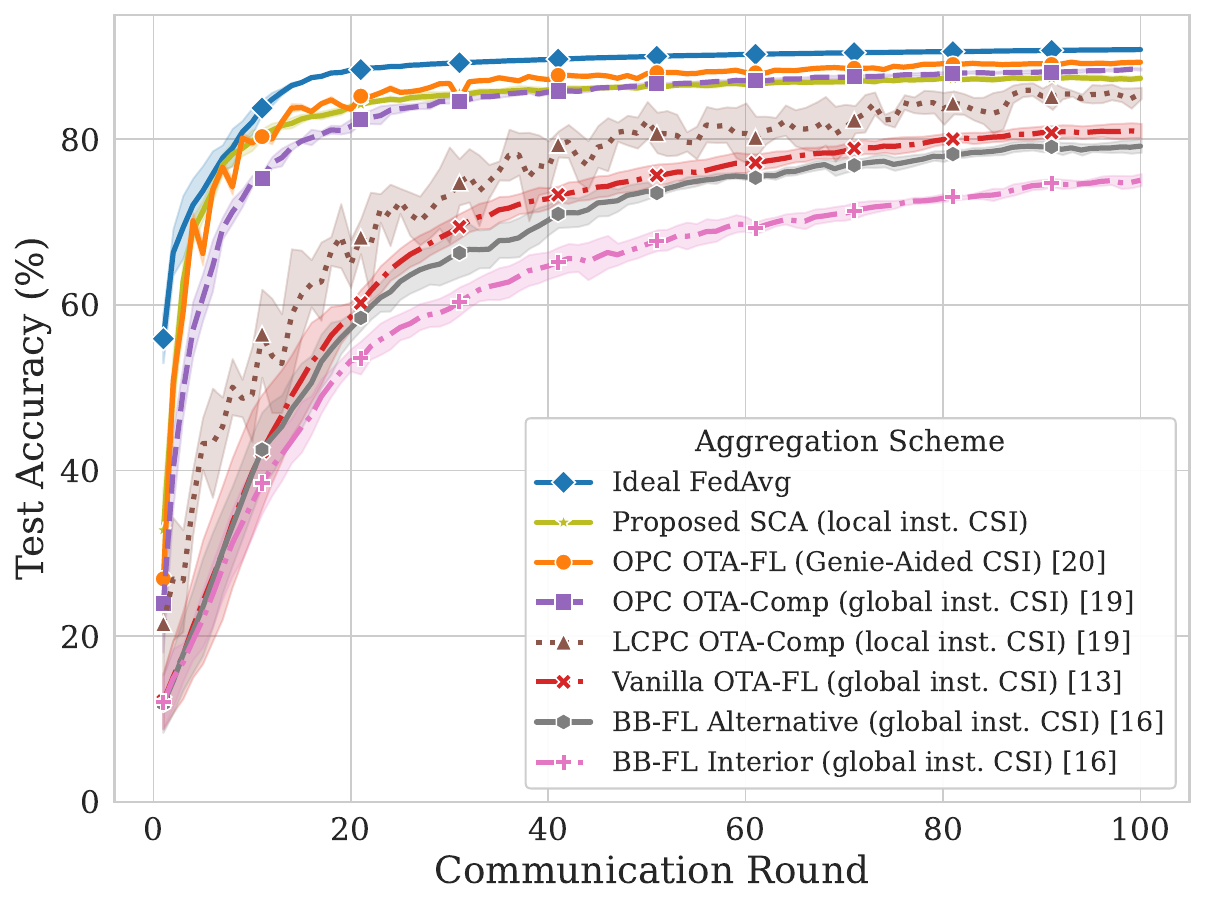}
    \caption{OTA-FL: test accuracy vs. FL rounds}
    \label{fig:sc-ota-acc}
  \end{subfigure}

  \vspace{1.5mm}

  \begin{subfigure}[t]{0.4\textwidth}
    \centering
    \includegraphics[width=\linewidth]{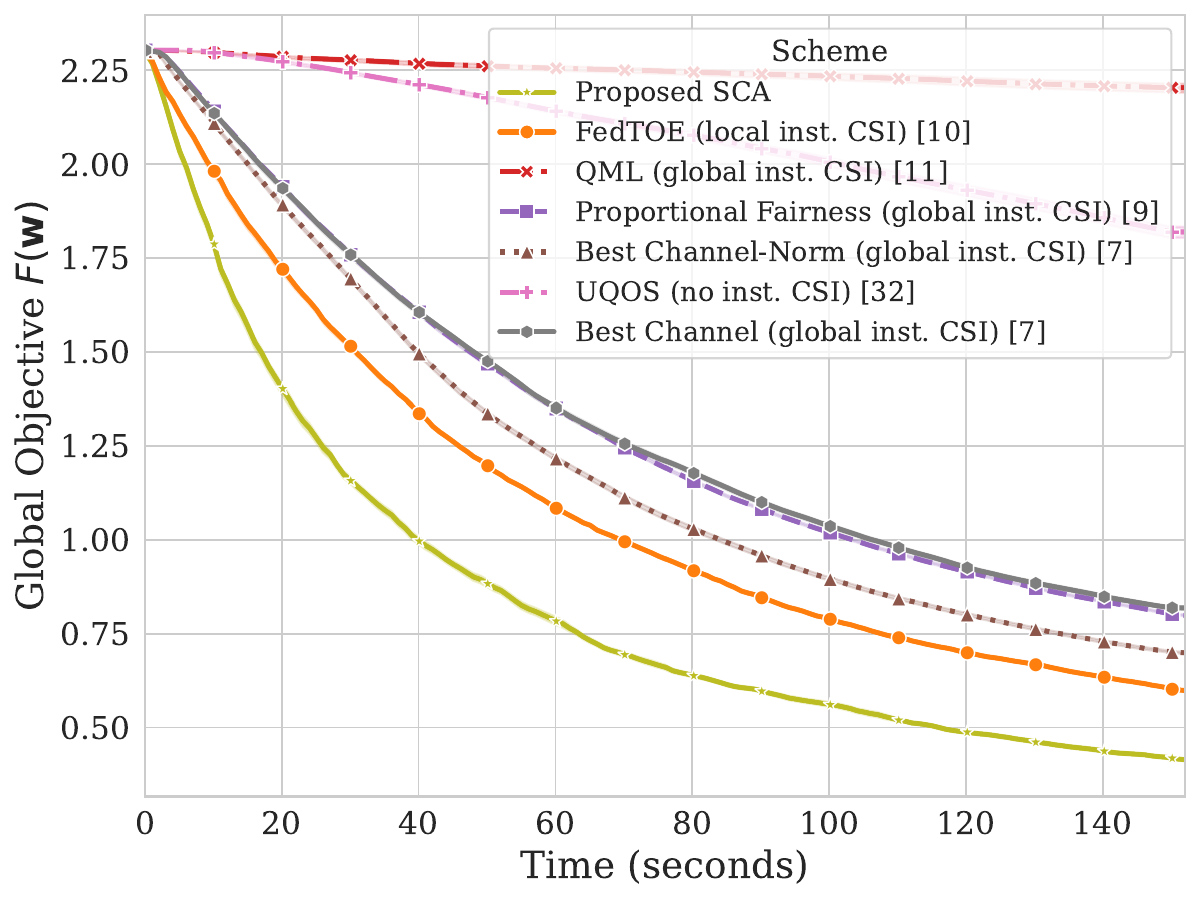}
    \caption{Digital FL: global objective $F(\mathbf{w})$  vs.\ time}
    \label{fig:sc-dig-loss}
  \end{subfigure}
  \quad 
  \begin{subfigure}[t]{0.4\textwidth}
    \centering
    \includegraphics[width=\linewidth]{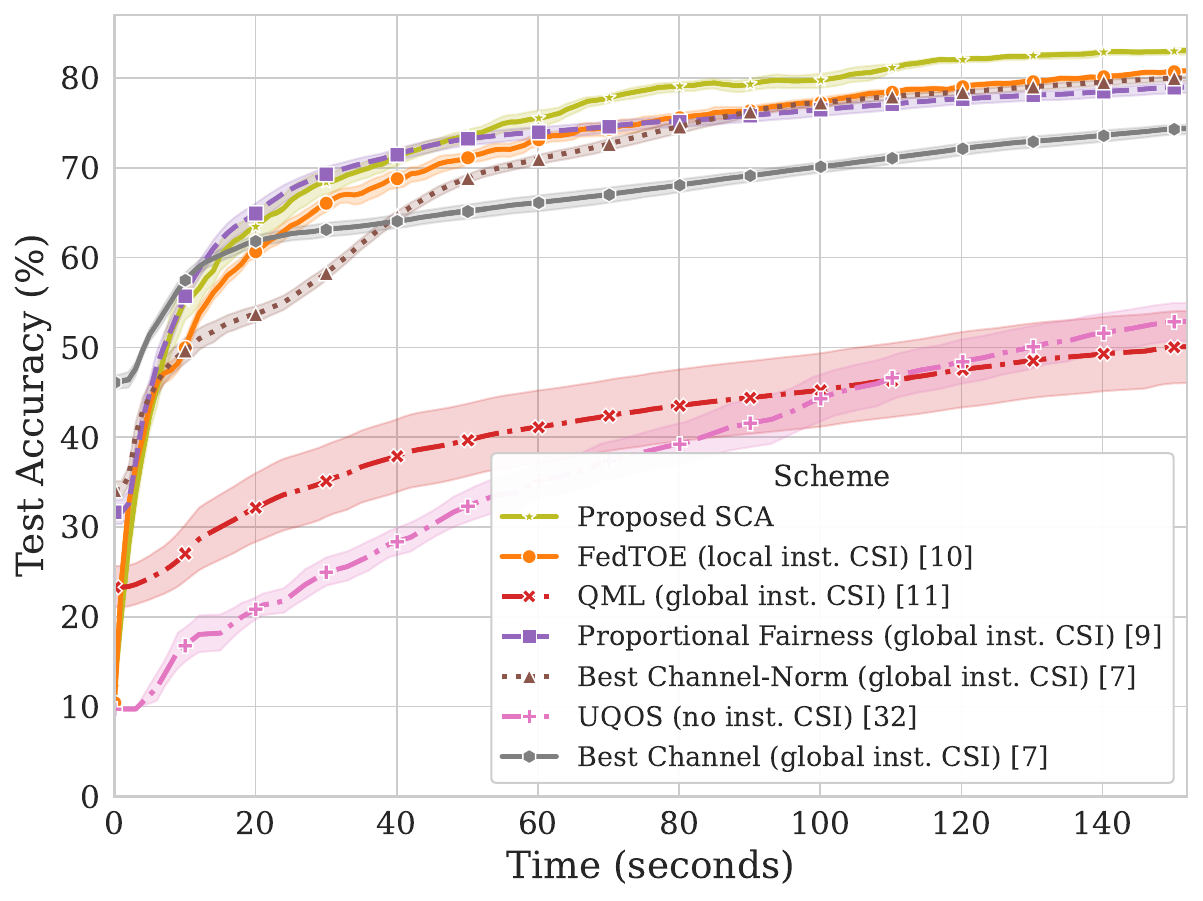}
    \caption{Digital FL: test accuracy vs.\ time}
    \label{fig:sc-dig-acc}
    \vspace{-2mm}
  \end{subfigure}
   \caption{Strongly convex task (MNIST softmax regression): OTA-FL \addb{($N = 50$ devices)} and digital FL ($N = 10$ devices) comparisons. Common parameters: $G_\text{max} = 20$, $\kappasc = 3$, and $\mu = 0.01$.}
  \label{fig:sc-grid}
  \vspace{-6mm}
\end{figure*}

In Fig. \ref{fig:sc-ota-loss} and \ref{fig:sc-ota-acc}, we compare these schemes showing the global objective $F(\mathbf{w})$ and average test accuracy vs. FL rounds, respectively, \addb{for $N = 50$ devices}. We set $\varrho_\text{in} = 0.7 \varrho_\text{max}$ for the BB-FL Interior and BB-FL Alternative. Each scheme uses a fixed step size tuned within $\eta\in(0,\tfrac{2}{\mu+L}]$. Ideal FedAvg demonstrates the best performance, since it aggregates gradients noiselessly. Among the practical wireless schemes, the best performance in terms of both metrics is attained by the genie-aided OPC OTA-FL scheme. However, this scheme requires noncausal genie-assisted CSI knowledge across all FL rounds, limiting its practicality. Our SCA-optimized OTA\text-FL closely tracks OPC OTA-FL and nearly reaches the Ideal FedAvg accuracy while requiring only statistical CSI (for design) and local instantaneous CSI (for transmission). 
While OPC OTA-Comp shows a fast global objective decay by minimizing per-round MSE performance with global CSI knowledge at the PS, the proposed scheme outperforms it despite the lack of global CSI, thanks to the well-structured bias and optimized bias-variance trade-off. Next, although LCPC OTA-Comp employs optimized truncated channel inversion, similar to the proposed scheme, it is limited by a common pre-scaler, slowing convergence. Furthermore,  BB-FL Alternative performs better than BB-FL Interior by carefully balancing the trade-off between the fraction of data exploited and maintaining less noisy FL updates, whereas BB-FL Interior restricts participation to a subset of devices, leading to poor generalization performance. Finally, although Vanilla OTA-FL eliminates model bias, it forces participation from weak-channel devices, inflating aggregation noise and slowing convergence. Overall, by judiciously designing biased average device participation to minimize the bias-variance trade-off, the proposed scheme matches the performance of the noncausal CSI-based SOTA method, while delivering noticeable performance gains over the remaining OTA-FL baselines.


%
\subsubsection{Comparison with SOTA digital-FL schemes}
Next, we benchmark the proposed digital-FL scheme against several SOTA digital-FL baselines on MNIST softmax regression (strongly convex case). For consistency, we simulate all schemes using dithered quantization.\\
$\bullet$ \underline{\textit{Best Channel}}\cite{Upd_aware_sched} selects $K\leq N$ devices with the highest instantaneous channel gain to participate in each round. The RB (time slot in our case) allocation is performed so that each device transmits the same number of overall bits.
(We exclude the scheme's gradient sparsification to ensure the numerical results are not biased by sparsity assumptions that may not hold broadly.)
 \\
$\bullet$ \underline{\textit{Best Channel-Norm}} \cite{Upd_aware_sched}, first picks $K'$ devices with the highest channel gain in an FL round, where $K \leq K' \leq N$. Then, $K$ devices with the largest local gradient norms are selected for participation out of the chosen set of $K'$ devices. Time slots are assigned proportionally to the gradient norms. 
\\
$\bullet$ \underline{\textit{Proportional Fairness}} \cite{Sched_policies} is a fairness-focused device scheduling scheme to address wireless heterogeneity. In each round $t$,
the $K\leq N$ devices with the largest normalized channel fading coefficients $\frac{\vert h_{m,t}\vert^2}{\Lambda_m}$ are selected for participation.
  \\
$\bullet$ \underline{\textit{Unbiased Quantized Optimized Scheduling (UQOS)}} \cite{Dig_vs_Analog} samples  $K\leq N$ devices in each round without replacement with probabilities $\{\pi_m\}$ obtained to minimize the convergence bound derived therein. A fixed data transmission rate $R$ is chosen for all devices, associated with an outage probability $p^\text{out}_m$, minimizing $\frac{1}{N}\sum_{m \in [N]}\frac{1}{p^\text{out}_m \pi_m}$ subject to $\pi_m \in [0,1]$ and $\sum_{m \in [N]} \pi_m = K$. Notably, the scheme accounts for both unsuccessful transmissions and device sampling and ensures that the global gradient estimate remains unbiased. \\
$\bullet$ \underline{\textit{Quantized Minimum  Latency (QML)}} \cite{Wireless_Quant_FL_Joint} aims at reducing the overall convergence time. A per-round optimization problem is solved to find the optimal bit and time slot allocation under a quantization noise variance constraint (averaged over devices). We modify the scheme to include random $K$-device sampling to obtain improved performance.
\\
$\bullet$ \underline{\textit{FL with Transmission Outage and Quantization Error}}  \underline{\textit{ (FedTOE)}} \cite{Quant_FL_Outage_Constraint} selects the transmission rate by enforcing the same outage probability $p^\text{out}_m$ for each device. $K$ devices are randomly chosen for participation. An optimization problem is solved for optimal RB and bit allocation to minimize the quantization noise variance, averaged over the devices. 

We consider $N=10$ devices in our digital-FL simulations. For computational efficiency, we cap the per-round quantization as $r_m\le 16\, , \forall m$ (c.f.\ \eqref{r_int}). The per-round latency budget for the proposed scheme is set to be $T_{\max}=0.2$~s. Since every digital-FL scheme differs in per-round latency, to give each favorable operating conditions, we set
$T_{\max}$ as 3.2, 2.1, 2.4, 3.0, 2.2 seconds for Best Channel, Best Channel–Norm, Proportional Fairness, UQOS, FedTOE, respectively. Baseline parameters $K$, $K'$, $R$, and $p^\text{out}_m$ are tuned heuristically via grid search. Since the majority of SOTA candidates use channel capacity-based transmissions, our per-round latency calculation uses channel capacity for every scheme. \addb{Importantly, because each digital scheme incurs a different per-round latency, we compare their performance vs running time (instead of rounds) for fairness. We also note that the CSI acquisition time is excluded for every method in our results, which is expected to be substantially higher in Best Channel, Best Channel–Norm, Proportional Fairness, and QML, which require global instantaneous CSI.}
Fig. \ref{fig:sc-dig-loss}-\ref{fig:sc-dig-acc} plot the global objective and test accuracy over $150$\,s of FL training. It can be observed that the proposed design performs the best among all the schemes in both metrics, achieving $\approx 83\% $ final test accuracy, thanks to the optimized device participation thresholds, post-scalers, and bit and resource allocations. Among the SOTA schemes, FedTOE performs the best by effectively guaranteeing unbiased FL updates with reduced effect of quantization errors. Proportional fairness offers a solid, low-complexity scheduling strategy to address wireless heterogeneity while ensuring zero bias, on average. Best Channel-Norm outperforms Best Channel scheduling by leveraging both CSI and local gradient strength information. 

Interestingly, despite being optimization-based (non-heuristic), UQOS and QML fail to demonstrate good performance guarantees. First, while UQOS establishes unbiased FL updates, on average, it uses uniform transmission rates across devices, forcing slower updates to accommodate the weaker channel devices. On the other hand, QML focuses on quantization noise alone, ignoring bias and transmission variance, which yields high-bias updates and therefore slower learning progress.
The proposed scheme carefully designs the digital-FL parameters by jointly considering the bias and variance terms, achieving roughly 2$\times$ faster convergence than SOTA to reach a target sub-optimality gap and accuracy.

\begin{figure*}[t]
\centering
\subfloat[\centering Test accuracy over FL rounds.]{
  \includegraphics[width=0.4\textwidth]{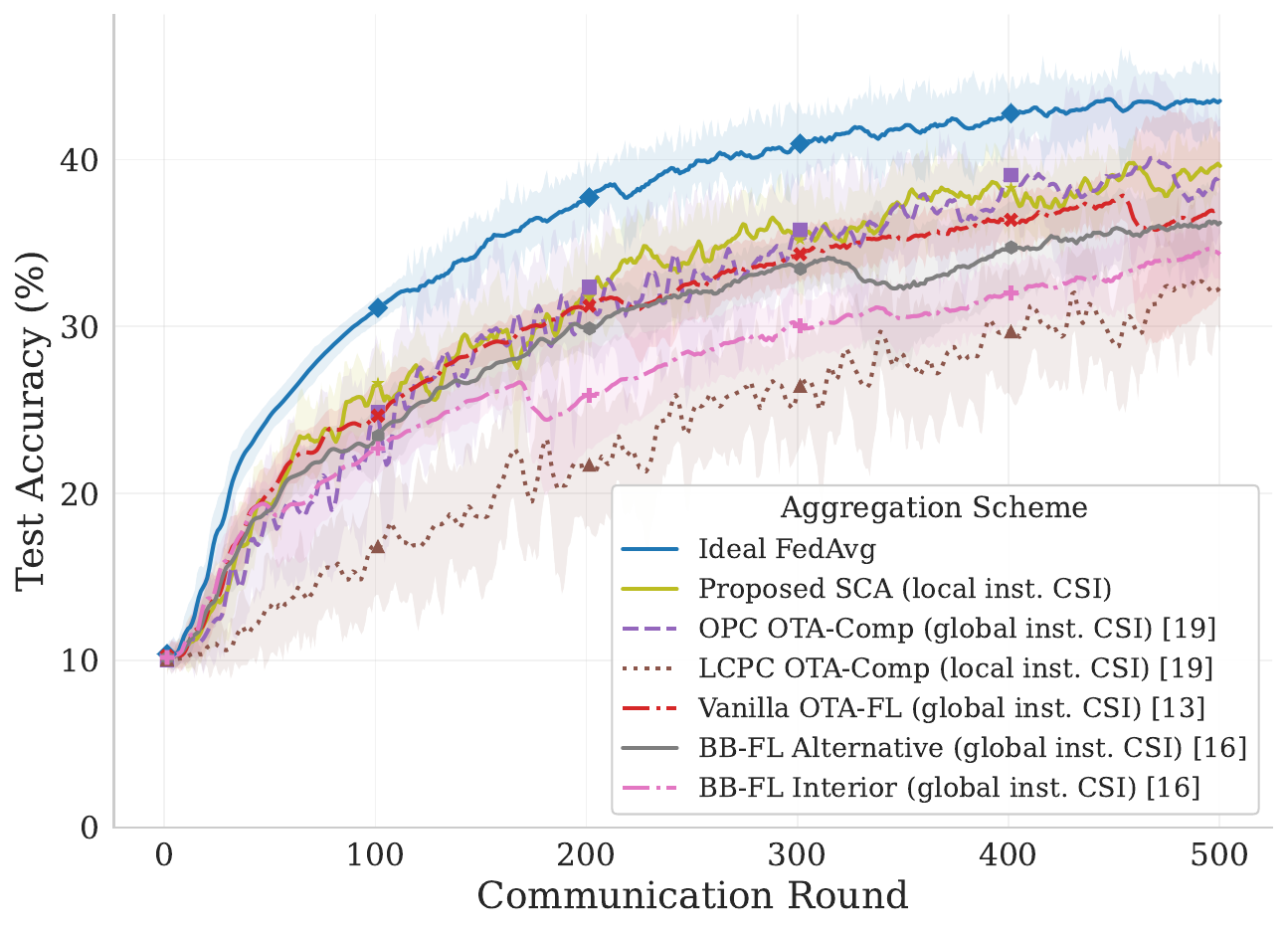}
}\qquad 
\subfloat[\centering Global objective $F(\mathbf w)$ over FL rounds.]{
  \includegraphics[width=0.4\textwidth]{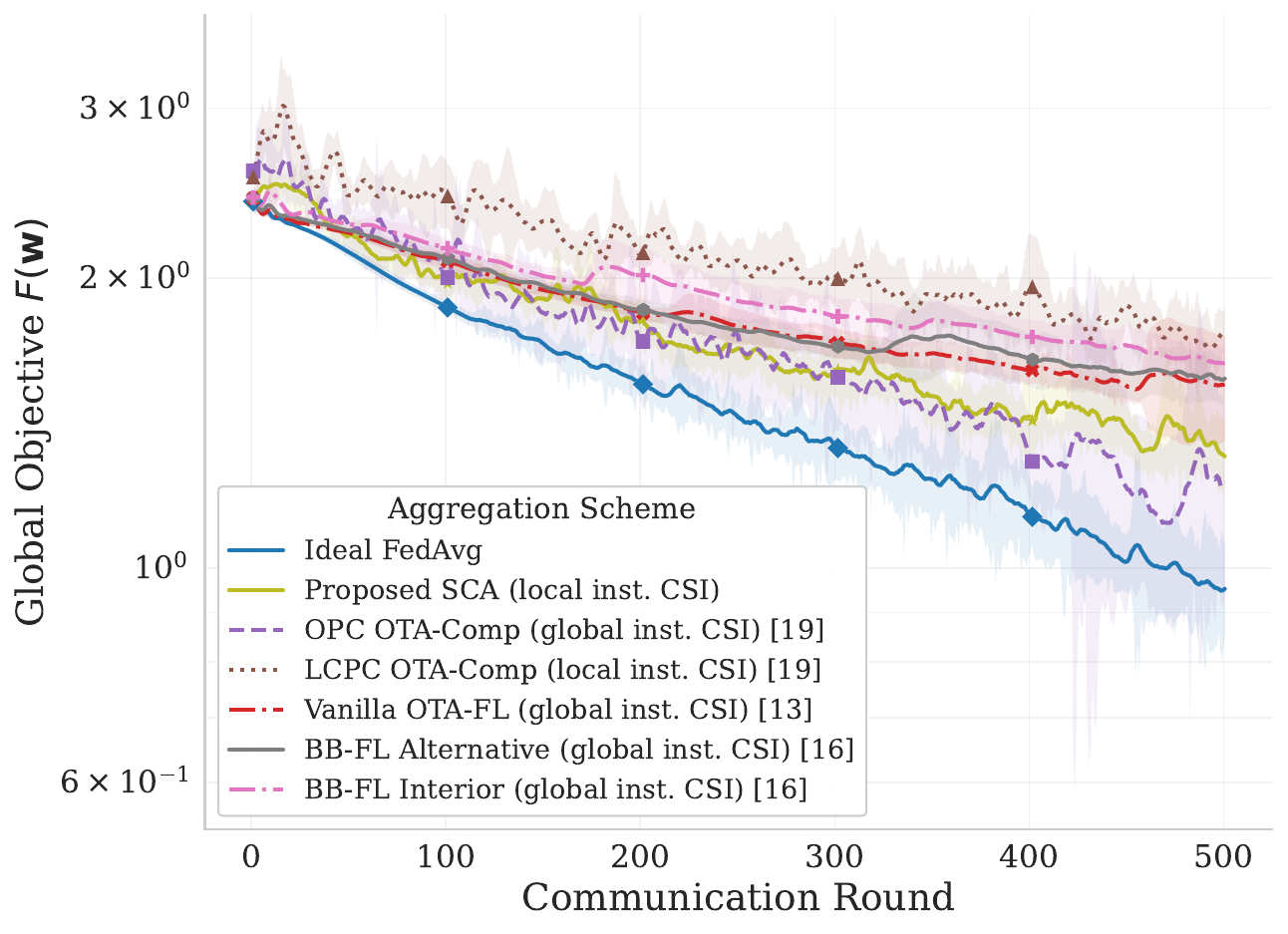}
}
\caption{\addb{Non-convex task (CIFAR-10 with ResNet-18, OTA-FL)}: $N{=}10$ devices, $G_{\max}{=}49$, $\kappa_{\mathrm{nc}}{=}2G_{\max}$, $\mu_{\mathrm{nc}}{=}0.01$.
}
\label{fig:nc-ota}
\vspace{-5mm}
\end{figure*}
\vspace{-8mm}
\addb{\subsection{Non-convex FL Task}
\label{subsec:non_convex_results}
To verify Theorem~\ref{thm:main_convergence_ncvx}, we train ResNet-18~\cite{He2015ResNet} on CIFAR-10~\cite{cifar10} dataset under OTA-FL with $N = 10$ devices, resulting in a highly non-convex FL task with $d\!\approx\!11.17$M parameters. Here, we consider \emph{two-classes per device} non-i.i.d. data split, where each device holds samples belonging to only two classes. Each device uses an $\ell_2$-regularized cross-entropy loss with regularization coefficient $\mu_{\mathrm{nc}}{=}0.01$. Due to the huge model dimensionality, we focus on the more scalable OTA-FL and compare the same baselines as in the strongly convex case, excluding OPC OTA-FL \cite{Opt_power_control_OTA_FL} (which relies on Polyak-Lojasiewicz (PL) condition inconsistent with this setting, and on knowledge of \emph{all future instantaneous CSI}). Step sizes are chosen in $\eta\!\in\!(0,1/L]$, where the smoothness parameter $L$ is estimated online from successive model updates. Figures~\ref{fig:nc-ota}(a)-(b) report test accuracy and the global objective versus rounds. Among practical FL, the proposed SCA-optimized design closely tracks OPC~OTA-Comp in accuracy, while outperforming all other practical baselines. In terms of global loss, OPC OTA-Comp is only marginally faster, leaving a small gap. 
Notably, OPC OTA-Comp requires \emph{global instantaneous CSI at the PS each round} to optimize device pre-scalers, whereas our method relies only on \emph{statistical CSI at the PS} and \emph{local instantaneous CSI at devices}. Despite this low requirement, the on-par performance confirms that optimizing the derived bias-variance trade-off yields performance gains even in the non-convex regime. Our method consistently outperforms Vanilla OTA-FL, whose zero-instantaneous-bias enforcement inflates the noise variance. While LCPC OTA-Comp also uses statistical CSI, it allows for an uncontrolled bias and hence converges sub-optimally. The proposed SCA-optimized scheme also delivers noticeable performance gains over the BB-FL Interior and Alternative scheduling strategies.}
\vspace{-4mm}
\section{Conclusion}
\label{Sec:Conclusion}
In this paper, we investigated the performance of OTA and digital FL systems in wireless heterogeneous environments \addb{for both strongly convex and smooth non-convex objectives}. Unlike existing works that either enforce zero-bias designs or allow uncontrollable bias, we proposed novel FL updates that permit a tunable fixed model bias. \addb{We characterized their learning behavior and derived unified bounds on optimality error (strongly convex) and finite-time stationarity (non-convex), revealing a bias-variance trade-off.} To prove the efficacy of our convergence analysis, we minimized this trade-off using an SCA-based parameter design optimization framework. Detailed numerical evaluations, including \addb{a non-convex FL task}, validate our theoretical findings, showing that the additional degree of freedom introduced by the tunable bias, combined with bias-variance trade-off minimization, leads to superior performance over SOTA wireless FL baselines.

 \vspace{-3mm}
\bibliographystyle{IEEEtran} 
\bibliography{Refs.bib} 
\appendices


\vspace{-5mm}
\section*{Appendix A}
\vspace{-1mm}
Here, we prove Theorem \ref{thm:main_convergence_sc} and~\ref{thm:main_convergence_ncvx},  characterizing the model optimality error (strongly convex) and finite-time expected stationarity (non-convex) for the proposed OTA- and digital-FL schemes. Auxiliary results are provided in Appendix B.

\noindent\textit{Proof of Theorem \ref{thm:main_convergence_sc}:}
We start by showing that, with $\mathcal W\equiv\{\mathbf w\in\mathbb R^d:\Vert\mathbf w\Vert\leq \max_{m\in[N]}\frac{1}{\mu} \Vert \nabla f_m(\mathbf 0) \Vert\}$, 
the minimizer of the biased problem \eqref{Incons_obj}
satisfies $\tilde{\mathbf{w}}\in\mathcal W$, and hence the 
optimality condition $\tilde{\mathbf{w}}=\mathcal{P}_{\mathcal{W}}(\tilde{\mathbf{w}}-\eta\nabla\tilde{F}(\tilde{\mathbf{w}}))$, since the unconstrained minimizer of the biased problem \eqref{Incons_obj} satisfies
$\nabla\tilde{F}(\tilde{\mathbf{w}})=\mathbf 0$.
From strong convexity, we have 
$\Vert\nabla\tilde{F}(\mathbf 0)\Vert
=
\Vert\nabla\tilde{F}(\tilde{\mathbf{w}})
-\nabla\tilde{F}(\mathbf 0)\Vert\geq\mu
\Vert\tilde{\mathbf{w}}\Vert.
$ Furthermore,
$
\Vert
\nabla\tilde{F}(\mathbf 0)\Vert
=\Vert\sum_m p_m\nabla f_m(\mathbf 0)\Vert
\leq\max_m\Vert\nabla f_m(\mathbf 0)\Vert$. Combining the two bounds, we have
$\Vert\tilde{\mathbf{w}}\Vert\leq\frac{1}{\mu}\max_m\Vert\nabla f_m(\mathbf 0)\Vert=D/2$, with $D$ defined in Theorem \ref{thm:main_convergence_sc}, implying that $\tilde{\mathbf{w}}\in\mathcal W$.

We now bound the expected FL model optimality error after $t$ rounds $\Vert \mathbf{w}_t - \mathbf{w}^*\Vert$, where $\mathbf{w}^*$ is the global minimizer in \eqref{FL_prob}. Since the iterative algorithm described in \eqref{Biased_GD} minimizes the biased objective $\tilde{F}(\mathbf{w})$ on average, we analyze the expected FL model optimality error by splitting it into two components: (1) the error between $\mathbf w_t$ and $\tilde{\mathbf{w}}$ (the biased objective minimizer), and (2) the error between $\tilde{\mathbf{w}}$ and the global minimizer $\mathbf{w}^*$, i.e., the model bias. Define $E_{t} = \|\mathbf{w}_{t} - \mathbf{w}^*\|^2$ and $\tilde{E}_{t} = \|\mathbf{w}_{t} - \tilde{\mathbf{w}}\|^2$.
Using $\Vert\mathbf a+\mathbf b\Vert^2 \le 2\Vert\mathbf a\Vert^2 + 2\Vert\mathbf b\Vert^2$, we obtain $\mathbb E[E_t]$
\begin{align}
\!\!\!{=}
\mathbb E[\|(\mathbf{w}_{t} - \tilde{\mathbf{w}}) + (\tilde{\mathbf{w}} - \mathbf{w}^*)\|^2]
\leq
2\mathbb E[\tilde E_t]{+}2\|\tilde{\mathbf{w}}{-} \mathbf{w}^*\|^2.\label{boundEt}\end{align}
We now bound the two terms $\mathbb{E} [\tilde E_{t}]$ and $\|{\tilde{\mathbf{w}}} - \mathbf{w}^*\|$.
\underline{\bf Bounding $\mathbb{E} [\tilde E_{t}]$:}
From Lemma \ref{One_step_progress_lemma:sc} (in Appendix B), with step size choice $\eta \in (0,\frac{2}{\mu + L}]$, the expected one-step FL progress satisfies
\begin{align}
\mathbb{E}[\tilde{E}_{t+1}] \le \left(1-\eta \mu\right)^2 \mathbb E[\tilde{E}_{t}] + \eta^2 \zeta,
\label{FL_one_step_prog}
\end{align}
where $\zeta$ is the global gradient estimator variance, given by Lemmas \ref{OTA_variance_lemma}-\ref{Dig_variance_lemma} for the OTA/digital schemes.
Unrolling \eqref{FL_one_step_prog} gives
\begin{align}
\mathbb{E}[\tilde{E}_{t}]
&\le \left(1-\eta \mu \right)^{2t} \tilde{E}_{0}
+ \eta^2 \zeta \sum_{j=0}^{t-1} (1- \eta \mu)^{2j} \nonumber
\\&
\overset{(a)}{\leq}
\left(1-\eta \mu\right)^{2t}\tilde{E}_{0} + \frac{\eta}{\mu} \zeta
\overset{(b)}{\leq} D^2 \left(1-\eta \mu\right)^{2t} + \frac{\eta}{\mu} \zeta,
\label{t_step}
\end{align}
where (a) uses the geometric sum and $\eta\mu\le 1$, and (b) uses $\tilde{E}_0=\|\mathbf{w}_0-\tilde{\mathbf{w}}\|^2 \le D^2$ since $\mathbf{w}_0, \tilde{\mathbf{w}} \in \mathcal{W}$ with $D =2\max_{m\in[N]} \frac{1}{\mu} \|\nabla f_m(\mathbf 0)\|$ as the diameter of $\mathcal W$.
\\
\noindent\underline{\bf Bounding $\|{\tilde{\mathbf{w}}} - \mathbf{w}^*\|$:}
Strong convexity of $\tilde{F}(\cdot)$ implies
\begin{align}
    \mu^2\|{\tilde{\mathbf{w}}}{-}\mathbf{w}^*\|^2 {\leq}\| \nabla\tilde{F}(\tilde{\mathbf{w}}){-} \nabla\tilde{F}({\mathbf{w}^*}) \|^2{=}\|  \nabla\tilde{F}({\mathbf{w}^*}) \|^2 \label{grad_norm_biased},
\end{align}
since $\nabla \tilde{F}(\tilde{\mathbf{w}}) = \mathbf{0}$. Moreover, for any arbitrary $\mathbf{w} \in \mathbb{R}^d$,
\begin{align}
&\| \nabla F(\mathbf{w}) -  \nabla\tilde{F}({\mathbf{w}})\|^2 = \big\| \sum_{m \in [N]} \big( p_m - \tfrac{1}{N} \big) \nabla f_m(\mathbf{w}) \big\|^2\nonumber\\
&= \big\| \sum_{m \in [N]} \big( p_m - \tfrac{1}{N} \big) (\nabla f_m(\mathbf{w}) -\nabla F(\mathbf{w}))\big\|^2\nonumber\\
&\le \sum_{m \in [N]} \Big(p_m - \tfrac{1}{N}\Big)^2 \cdot \sum_{m \in [N]} \| \nabla f_m(\mathbf{w})-\nabla F(\mathbf{w})\|^2,
\label{usedinnonconvx}
\end{align}
where we used $\sum_m (p_m-\frac{1}{N})=0$ in the second equality, and utilized the Cauchy–Schwarz inequality in the last step.
Evaluating at $\mathbf{w}^*$ and \addb{using the definition of $\kappasc$}, we obtain
\begin{align}
\|\nabla\tilde{F}({\mathbf{w}}^*) \| ^2 \le N \addb{\kappasc^2} \sum_{m \in [N]} \Big(p_m{-}\tfrac{1}{N}\Big)^2.
\label{grad_norm_biased1}
\end{align}
Combining \eqref{grad_norm_biased} and \eqref{grad_norm_biased1} yields
\begin{align}
\|{\tilde{\mathbf{w}}} - \mathbf{w}^*\|^2 \le \frac{N \addb{\kappasc^2}}{\mu^2} \sum_{m \in [N]}\Big( p_m - \tfrac{1}{N}\Big)^2.
\label{Bias_lemma}
\end{align}
Theorem \ref{thm:main_convergence_sc} follows by combining \eqref{t_step} and \eqref{Bias_lemma} into \eqref{boundEt}. \qed

\addb{\textit{Proof of Theorem \ref{thm:main_convergence_ncvx}:} We write
$
\nabla F(\mathbf w_t){=}\nabla \tilde F(\mathbf w_t)+\big(\nabla F(\mathbf w_t){-}\nabla \tilde F(\mathbf w_t)\big)
$. Using $\|\mathbf a+\mathbf b\|^2\le 2\|\mathbf a\|^2+2\|\mathbf b\|^2$, taking expectations, and averaging over $T$ rounds yields
\begin{align}
\label{decompose_ncvx}
\frac{1}{T}\sum_{t=0}^{T-1}\mathbb{E}[\|\nabla F(\mathbf{w}_t)\|^2] 
\le &
\frac{2}{T}\!\sum_{t=0}^{T-1}\!\mathbb E\!\big[\|\nabla \tilde F(\mathbf w_t)\|^2\big]
\\&
\!+\!\frac{2}{T}\!\sum_{t=0}^{T-1}\!\mathbb E\!\big[\|\nabla F(\mathbf w_t)-\nabla \tilde F(\mathbf w_t)\|^2\big].
\nonumber
\end{align}
Now, we bound the two key terms in \eqref{decompose_ncvx}.
\underline{\bf Bounding $\frac{2}{T}\!\sum_{t=0}^{T-1}\!\mathbb E\!\big[\|\nabla \tilde F(\mathbf w_t)\|^2\big]$:}
Invoking Lemma \ref{stionarity_lemma:nc} (in Appendix B) 
along with the definition $\tilde F(\cdot)=\sum_m p_m f_m(\cdot)$, 
 we obtain
\begin{align*}
\!\frac{2}{T}\sum_{t=0}^{T-1}\mathbb{E}[\|\nabla \tilde F(\mathbf w_t)\|^2]
\!\le\!
4\sum_m p_m\frac{f_m(\mathbf w_0){-}\mathbb E[f_m(\mathbf w_T)]}{\eta T}
{+}2\eta L\zeta.
\end{align*}
By Assumption \ref{ass:smooth_lb},  it follows that $f_n(\mathbf{w}_T)\ge f_n^{\inf}$, hence $f_n(\mathbf w_0){-}\mathbb E[f_n(\mathbf w_T)]\le f_n(\mathbf w_0)-f_n^{\inf}\le \max_{m}(f_m(\mathbf w_0){-}f_m^{\inf})$. As a result, $\frac{2}{T}\sum_{t=0}^{T-1}\mathbb{E}\big[\|\nabla \tilde F(\mathbf w_t)\|^2\big] $
\begin{align}
\le
\frac{4\,\max_{m\in[N]} \big(f_m(\mathbf w_0)-f_m^{\inf}\big)}{\eta T}
+2\eta L\,\zeta.
\label{init_term_ncvx}
\end{align}
\underline{\bf Bounding $\frac{2}{T}\!\sum_{t=0}^{T-1}\!\mathbb E\!\big[\|\nabla F(\mathbf w_t)-\nabla \tilde F(\mathbf w_t)\|^2\big]$:}
From \eqref{usedinnonconvx} (which is valid also for smooth, non-convex objectives) 
evaluated at $\mathbf{w}_t$,
combined with 
Assumption \ref{ass:bounded_data_hetr},
we obtain
\begin{align}
\nonumber
&\| \nabla F(\mathbf{w}_t) -  \nabla\tilde{F}({\mathbf{w}_t})\|^2
\le 
N\,\kappanc^2
\sum_{m \in [N]} \Big(p_m - \tfrac{1}{N}\Big)^2.
\end{align}
Thus, $\frac{2}{T}\sum_{t=0}^{T-1}\mathbb E\!\big[\|\nabla F(\mathbf w_t)-\nabla \tilde F(\mathbf w_t)\|^2\big]$
\begin{align}
&\le
2N\kappanc^2\!\!\sum_{m\in[N]}\big(p_m-\tfrac{1}{N}\big)^2.
\label{bias_term_ncvx}
\end{align}
Combining \eqref{init_term_ncvx} and \eqref{bias_term_ncvx} into \eqref{decompose_ncvx} establishes Theorem \ref{thm:main_convergence_ncvx}.}
\vspace{-1mm}

\section*{Appendix B: Auxiliary results}
\begin{lemma}[Strongly convex case: one-step progress]
Under Assumptions \ref{ass:bounded_stochastic_grad}, \ref{ass:smooth_lb}, and \ref{ass:sc} with learning step size $\eta \in (0, \frac{2}{\mu + L}]$,  the expected biased FL model optimality error after $t+1$ rounds of OTA-FL and digital FL satisfies
\begin{align*}
\mathbb{E}[\tilde{E}_{t+1}] \leq \left(1-\eta \mu\right)^2 \mathbb E[\tilde{E}_{t}] + \eta^2\zeta,
\end{align*}
where
$\zeta$ is the global gradient estimator variance 
(Lemmas \ref{OTA_variance_lemma}-\ref{Dig_variance_lemma}).
\label{One_step_progress_lemma:sc}
\end{lemma}
\begin{proof}
According to the presented generic FL model updates in \eqref{Biased_GD}, and using the fact that $\tilde{\mathbf{w}}=\mathcal{P}_{\mathcal{W}}(\tilde{\mathbf{w}}-\eta\nabla\tilde{F}(\tilde{\mathbf{w}}))$ (optimality condition for $\tilde{\mathbf{w}}$),
we have 
\begin{align*}
\tilde{E}_{t+1} 
& =\left \Vert \mathcal{P}_{\mathcal{W}}\left(\mathbf{w}_{t} - \eta \hat{\boldsymbol{g}}_t\right) - \mathcal{P}_{\mathcal{W}}(\tilde{\mathbf{w}}-\eta\nabla\tilde{F}(\tilde{\mathbf{w}}))\right\Vert^2
\\
&\leq\left\|\mathbf{w}_{t} - \eta \hat{\boldsymbol{g}}_t - (\tilde{\mathbf{w}}-\eta\nabla\tilde{F}(\tilde{\mathbf{w}}))\right\|^2, 
\end{align*}where the inequality follows from 
non-expansiveness of the projection onto the closed convex set \(\mathcal{W}\) \cite[Corollary 2.2.3]{Nesterov_book}.
Moreover, based on \eqref{OTA_exp_global_grad}, \eqref{Digital_exp_global_grad}, and Assumption \ref{ass:bounded_stochastic_grad}, the estimated global gradient $\hat{\boldsymbol{g}}_t$ in \eqref{OTA_Signal_model2} and \eqref{Digital_Signal_model2} 
satisfies
\begin{align}
    \hat{\boldsymbol{g}}_t = \sum_{m \in [N]} p_m \nabla{f}_m(\mathbf{w}_t) + \boldsymbol{e}_t = \nabla\tilde{F}(\mathbf{w}_t) + \boldsymbol{e}_t,
    \label{Est_G_Grad_decomp}
\end{align}
where $\boldsymbol{e}_t = \hat{\boldsymbol{g}}_t - \mathbb{E}[\hat{\boldsymbol{g}}_t \mid \mathbf{w}_t]$ is a zero-mean error in the gradient estimate of the biased objective $\nabla\tilde{F}(\mathbf{w}_t)$, evaluated at the current FL model $\mathbf{w}_t$. Using \eqref{Est_G_Grad_decomp}, the expected FL model optimality error at round $t{+}1$ conditional on $\mathbf{w}_t$, is derived as
\begin{align*}
 &\mathbb{E}[\tilde{E}_{t+1} \mid \mathbf{w}_t]=\big\|(\mathbf{w}_{t} - \tilde{\mathbf{w}})- \eta (\nabla\tilde{F}(\mathbf{w}_t)
 -\nabla\tilde{F}(\tilde{\mathbf{w}})+ \boldsymbol{e}_t)  \big\|^2\\
&= \big\|(\mathbf{w}_{t} - \tilde{\mathbf{w}})- \eta (\nabla\tilde{F}(\mathbf{w}_t) - \nabla\tilde{F}(\tilde{\mathbf{w}}))  \big\|^2 + \eta^2 \mathbb{E}\big[ \left\Vert\boldsymbol{e}_t\right\Vert^2 \mid \mathbf{w}_t\big],
\end{align*}
Invoking the $\mu$-strong convexity and $L$-smoothness of $\tilde{F}(\mathbf{w})$ following from Assumptions \ref{ass:smooth_lb} and \ref{ass:sc},
and bounding the gradient estimation error variance by $\zeta$, we further bound:
\begin{align*}
\mathbb{E}[\tilde{E}_{t+1} \mid \mathbf{w}_t] &\leq  \left(1-\eta \mu\right)^2 \Vert \mathbf{w}_{t} - \tilde{\mathbf{w}}\Vert^2 + \eta^2\zeta,
\end{align*}
where the contraction term follows from \cite[P2]{NCOTA}
and \cite[Sec. III]{10947567}
with $\eta \in (0, \frac{2}{\mu + L}]$. Substituting $\|\mathbf{w}_{t} - \tilde{\mathbf{w}}\|^2 = \tilde{E}_{t}$ and taking expectation over $\mathbf w_t$ concludes the proof.
\end{proof}
\vspace{-8mm}
\addb{\begin{lemma}[Non-convex case: stationarity]
Under Assumptions \ref{ass:bounded_stochastic_grad} and \ref{ass:smooth_lb} with learning step size $\eta \in (0, \frac{1}{L}]$,  the finite-time biased expected stationarity after $T$ rounds of OTA-FL and digital FL satisfies
\begin{align*}
\frac{1}{T}\sum_{t=0}^{T-1}\mathbb{E}[\|\nabla \tilde F(\mathbf w_t)\|^2]
\;\le\;
2\frac{\tilde F(\mathbf w_0)-\mathbb E[\tilde F(\mathbf w_T)]}{\eta T}
+\eta L\,\zeta,
\end{align*}
where
$\zeta$ is the global gradient estimator variance 
(Lemmas \ref{OTA_variance_lemma}-\ref{Dig_variance_lemma}). 
\label{stionarity_lemma:nc}
\end{lemma}
\begin{proof}
Utilizing $L$-smoothness of the biased objective $\tilde{F}(\cdot)$ (Assumption \ref{ass:smooth_lb}) at $\mathbf{w}_t$ and $\mathbf{w}_{t+1}$, we have $ \tilde F(\mathbf{w}_{t+1})$
\begin{align}
    \leq \tilde F(\mathbf{w}_{t}) \!+ \!\nabla \tilde{F}(\mathbf{w}_t)^T(\mathbf{w}_{t+1} \!- \!\mathbf{w}_t)\! +\! \frac{L}{2} \Vert\mathbf{w}_{t+1} - \mathbf{w}_t\Vert^2.
    \label{smoothness_F_tilde}
\end{align}
Recall that the model updates are given by $\mathbf{w}_{t+1} = \mathbf{w}_{t} - \eta \hat{\boldsymbol{g}}_t$, where $\hat{\boldsymbol{g}}_t$ is the estimate of the global gradient, with $\mathcal{W}\equiv \mathbb{R}^d$ in the non-convex case. Following the same steps as in \eqref{Est_G_Grad_decomp}, we have $\mathbb E[\hat{\boldsymbol{g}}_t|\mathbf w_t]=\nabla \tilde F(\mathbf{w}_{t})$ and  $\mathbb E[\Vert\hat{\boldsymbol{g}}_t-\nabla \tilde F(\mathbf{w}_{t})\Vert^2|\mathbf w_t]\leq \zeta$ (global gradient estimation variance, c.f. Lemmas \ref{OTA_variance_lemma}-\ref{Dig_variance_lemma}).
It then follows that 
$\mathbb E[\mathbf{w}_{t+1} \!- \!\mathbf{w}_t|\mathbf w_t]=-\eta\nabla \tilde F(\mathbf{w}_{t})$
and
$\mathbb E[ \|\mathbf{w}_{t+1} \!- \!\mathbf{w}_t\|^2|\mathbf w_t] 
\leq
\eta^2\Vert\nabla \tilde F(\mathbf{w}_{t})\Vert^2+\eta^2\zeta$. Next, we apply expectation conditional on $\mathbf{w}_t$ to both sides on \eqref{smoothness_F_tilde} and further get:
\begin{align*}
\mathbb{E}[\tilde F(\mathbf{w}_{t+1})|\mathbf w_t] &\leq \tilde F(\mathbf{w}_{t}) + (\tfrac{\eta^2L}{2} - \eta) \Vert \nabla \tilde{F}(\mathbf{w}_t)\Vert^2  + \frac{\eta^2 L}{2} \zeta \nonumber \\
& \leq \tilde F(\mathbf{w}_{t}) - \frac{\eta}{2} \Vert \nabla \tilde{F}(\mathbf{w}_t)\Vert^2  + \frac{\eta^2 L}{2} \zeta,
\end{align*}
where we utilized that the step size $\eta \leq \frac{1}{L}$ to have the second inequality. Rearranging the above inequality, 
taking total expectations, summing from $t=0$ to $T-1$, and telescoping yields the result stated in the lemma.
\end{proof}
}
\begin{proof}[Proof of Lemmas \ref{OTA_variance_lemma} and \ref{Dig_variance_lemma}]
We begin by expressing the variance of the biased OTA or digital model update as:\footnote{In this proof, all expectations are implicitly 
conditional on $\mathbf{w}_t$.} 
\begin{align}
&\mathbb{E} \big[\big\|\hat{\boldsymbol{g}}_t - \!\! \sum_{m \in [N]} \!\!p_m\nabla f_m(\mathbf w_t)\big\|^2\big] \!= \!
\mathbb{E} \big[\big\|\hat{\boldsymbol{g}}_t - \!\!\! \sum_{m \in [N]} p_m\mathbf g_{m,t}\big\|^2\big] 
\nonumber \\ & + \mathbb{E} \big[\big\| \sum_{m \in [N]} p_m(\mathbf g_{m,t} -  \nabla f_m(\mathbf w_t))\big\|^2\big],\label{variance_decompos}
\end{align}
where we used the fact that 
$\mathbb{E} [\hat{\boldsymbol{g}}_t|\mathbf g_{m,t},\forall m]= \sum_{m \in [N]} p_m\mathbf g_{m,t}$.
The second variance term is due to mini-batch data selection, and is bounded as
$\mathbb{E} [\| \sum_{m \in [N]} p_m(\mathbf g_{m,t} -  \nabla f_m(\mathbf w_t))\|^2]$ 
\begin{align}
&=  \sum_{m \in [N]} p_m^2\mathbb{E} \big[\big\Vert \boldsymbol{g}_{m,t}- \nabla f_m(\mathbf{w}_t)\big\Vert^2\big] \leq 
\sum_{m \in [N]} p_m^2 \sigma_m^2, \label{mini_batch_variance}
\end{align}
where we first used the independence of mini-batch local gradients across devices, followed by Assumption \ref{ass:bounded_stochastic_grad}. The first variance term in \eqref{variance_decompos} is due to noisy communication, and is specialized next to the two communication models.

\underline{\bf OTA-FL model:}
From \eqref{OTA_Signal_model2}, 
$\mathbb{E} [\|\hat{\boldsymbol{g}}_t - \sum_m p_m \mathbf{g}_{m,t}\|^2] $ 
\begin{align}
&=\! \mathbb{E} \Big[\Big\| \sum_m \Big(\frac{\chi^A_{m,t}\gamma_{m}}{\alpha}-p_m\Big)\mathbf g_{m,t} + \frac{\mathbf{z}_t}{\alpha}\Big\|^2\Big] \nonumber\\
&\overset{(a)}{=}  \sum_{m \in [N]} \mathbb{E}\Big[\Big(\frac{\chi^A_{m,t}\gamma_{m}}{\alpha}-p_m\Big)^2\Big] \mathbb{E}\Big[\Vert\boldsymbol{g}_{m,t}\Vert^2\Big] + \frac{d N_0}{\alpha^2}
\nonumber \\
&\overset{(b)}{\leq} \sum_{m \in [N]}  G_\text{max}^2 p_m^2 \left(\frac{\gamma_m}{\alpha_m} -1 \right)  + \frac{d N_0}{\alpha^2},
\label{OTA_comm_variance}
\end{align}
where in (a) we used $\mathbb{E}[\frac{\chi^A_{m,t}\gamma_{m}}{\alpha}] = p_m$ and the mutual independence of
noise, fading (\(\chi^A_{m,t}\)) and mini-batch local gradients across the devices.
(b) follows from $\Vert\mathbf g_{m,t}\Vert\leq G_\text{max},\forall m,t$ (Assumption \ref{ass:bounded_loss_grad}) and 
$p_m = \frac{\alpha_m}{\alpha}$.
Finally, using \eqref{OTA_comm_variance} and \eqref{mini_batch_variance} with \eqref{variance_decompos} provides the desired result.

\underline{\bf Digital FL model:}
From \eqref{Digital_Signal_model2}, the first term in \eqref{variance_decompos} is bounded as $\mathbb{E} [\|\hat{\boldsymbol{g}}_t - \sum_m p_m \mathbf{g}_{m,t}\|^2] $ 
\begin{align}
&= \mathbb{E} \Big[\Big\Vert \sum_{m \in [N]} \frac{\chi^D_{m,t}
\boldsymbol{g}_{m,t}^q
}{\nu_m} - p_m \boldsymbol{g}_{m,t}\Big\Vert^2 \Big]
\nonumber\\
&\overset{(a)}{=}  \sum_{m \in [N]} \mathbb{E} \Big[\Big\Vert\frac{\chi^D_{m,t}
\boldsymbol{g}_{m,t}^q
}{\nu_m} - p_m \boldsymbol{g}_{m,t}\Big\Vert^2 \Big]\nonumber \\
&=  \sum_{m \in [N]} \!\mathbb{E} \Big[\Big(\frac{\chi^D_{m,t}}{\nu_m}\Big)^2\Big]\mathbb{E} \Big[
\Vert\boldsymbol{g}_{m,t}^q\Vert^2 \Big]\!- \!p_m^2 \mathbb{E} [\Vert\boldsymbol{g}_{m,t}\Vert^2]\nonumber\\
&\overset{(b)}{\leq} \sum_{m \in [N]}\frac{\beta_m \mathbb{E}\left[
\frac{d\Vert\boldsymbol{g}_{m,t}\Vert_\infty^2}{(2^{r_m}-1)^2} 
 + \Vert\boldsymbol{g}_{m,t}\Vert^2
 \right]}{\nu_m^2 }\!- \!p_m^2 \mathbb{E} [\Vert\boldsymbol{g}_{m,t}\Vert^2]\nonumber,
\end{align}
where (a) follows from $\mathbb{E} [\frac{\chi^D_{m,t}
\boldsymbol{g}_{m,t}^q
}{\nu_m}] = p_m\boldsymbol{g}_{m,t}$,
the unbiasedness of dithered quantization, and the independence of fading and mini-batch gradients across devices.
(b) follows from the bound on the error of dithered quantization 
(see
\cite{Quant_FL_Outage_Constraint} and references therein):
$
\mathrm{var}(\boldsymbol{g}^q_{m,t}|\boldsymbol{g}_{m,t})
\leq
\frac{d\Vert \boldsymbol{g}_{m,t}\Vert_\infty^2}{(2^{r_m}-1)^2}$.
Leveraging Assumptions  \ref{ass:bounded_loss_grad} and \ref{ass:bounded_stochastic_grad}, along with 
$\beta_m\leq1$ and $p_m=\beta_m/\nu_m$, 
and using the fact that $\Vert\boldsymbol{g}_{m,t}\Vert_\infty \leq\Vert\mathbf g_{m,t}\Vert\leq G_\text{max}$ (Assumption~\ref{ass:bounded_loss_grad}), we further obtain $\mathbb{E} [\|\hat{\boldsymbol{g}}_t - \sum_m p_m \mathbf{g}_{m,t}\|^2] $

\begin{align}
&\leq \sum_{m \in [N]} p_m^2G_\text{max}^2\Big(\frac{1}{\beta_m} -1
 +\frac{d}{\beta_m(2^{r_m}-1)^2}
\Big). \label{Dig_comm_variance}
\end{align}
Finally, using \eqref{Dig_comm_variance} and \eqref{mini_batch_variance} with \eqref{variance_decompos} completes the proof. \end{proof}
 



\vspace{-5mm}\begin{IEEEbiography}[{\includegraphics[width=1in,height=1.25in,clip,keepaspectratio]{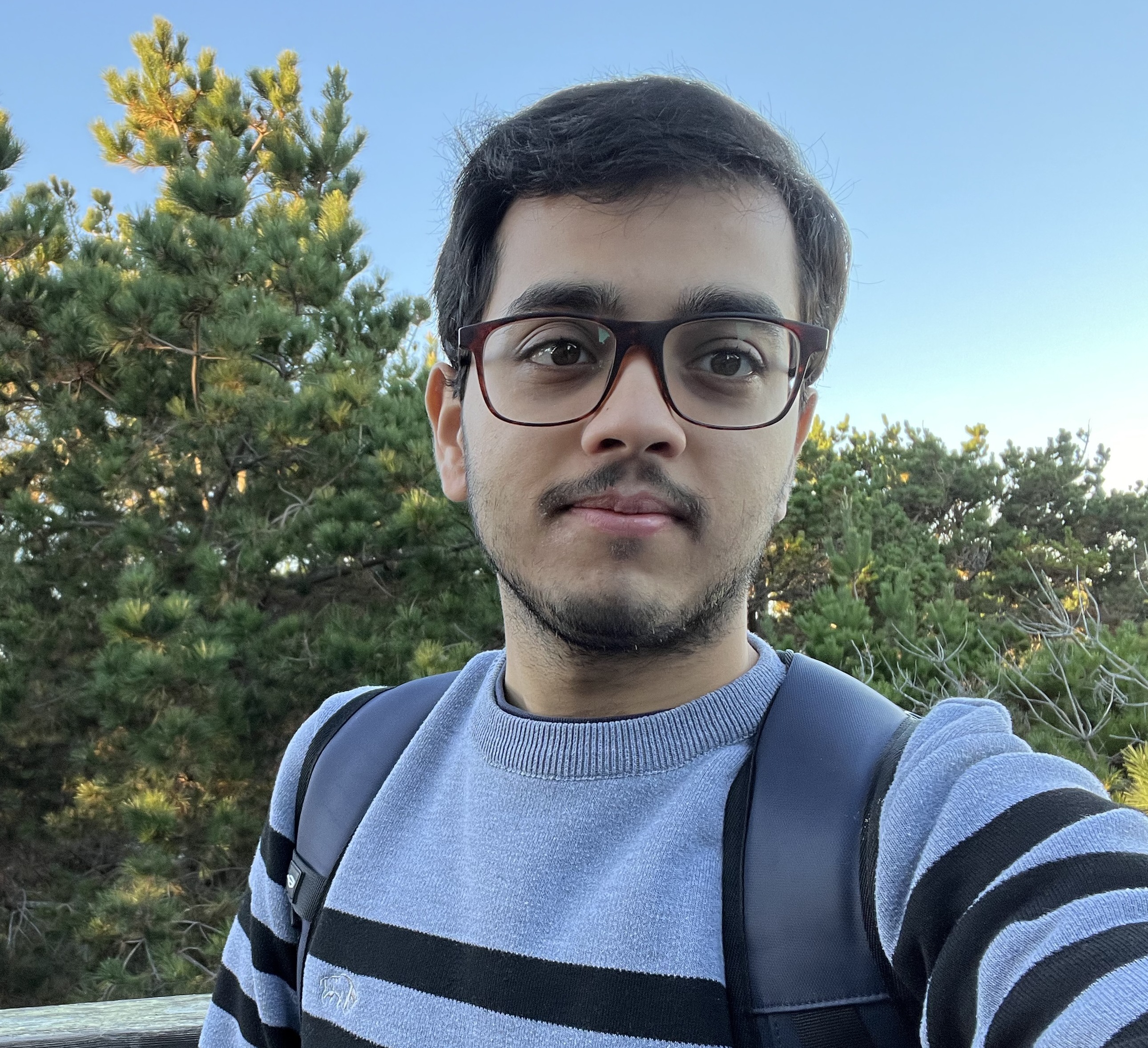}}]{Muhammad Faraz Ul Abrar}
(Graduate Student Member, IEEE) received the bachelor’s degree in electrical engineering from the School of Electrical Engineering and Computer Science, National University of Sciences and Technology (NUST), Islamabad, Pakistan, in 2021. He is currently working toward his Ph.D. degree with Arizona State University (ASU), USA. His current research directions include federated learning over wireless networks, distributed optimization, and time-varying optimization.
\end{IEEEbiography}

\begin{IEEEbiography}[{\includegraphics[width=1in,height=1.25in,clip,keepaspectratio]{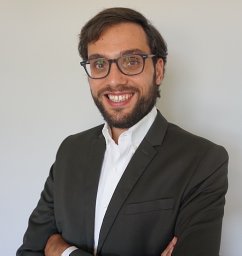}}]{Nicol\`{o} Michelusi}(Senior Member, IEEE) received the B.Sc. (with honors), M.Sc. (with honors), and Ph.D. degrees from the University of Padova, Italy, in 2006, 2009, and 2013, respectively, and an M.Sc. degree in telecommunications engineering from the Technical University of Denmark, Denmark, in 2009. 
From 2013 to 2015, he was a Postdoctoral Research Fellow with the Ming Hsieh Department of Electrical Engineering, University of Southern California, Los Angeles, CA, USA. From 2016 to 2020, he was an Assistant Professor with the School of Electrical and Computer Engineering, Purdue University, West Lafayette, IN, USA. He is currently an Associate Professor with the School of Electrical, Computer and Energy Engineering, Arizona State University, Tempe, AZ, USA. 
His research interests include 5G wireless networks, millimeter-wave communications, stochastic optimization, and decentralized and federated learning over wireless systems. 
He served as an Associate Editor for the \emph{IEEE Transactions on Wireless Communications} (2016–2021) and for the \emph{IEEE Transactions on Communications} (2023-2025). He is also a member of the IEEE Signal Processing for Communications and Networking Technical Committee. 
He co-chaired the Distributed Machine Learning and Fog Networking Workshop at IEEE INFOCOM in 2021, 2023, and 2024; the Wireless Communications Symposium at IEEE GLOBECOM 2020; the IoT, M2M, Sensor Networks, and Ad-Hoc Networking Track at IEEE VTC 2020; and the Cognitive Computing and Networking Symposium at ICNC 2018. He served as Technical Area Chair for the Communication Systems track at Asilomar 2023. 
He is the recipient of several awards, including the NSF CAREER Award in 2021, the IEEE Communication Theory Technical Committee (CTTC) Early Achievement Award in 2022, the IEEE Communications Society William R. Bennett Prize in 2024, and the IEEE ICC Best Paper Award for the Communication Theory Symposium in 2025.
\end{IEEEbiography}

\end{document}